\newcommand{\argmax}{\mathrm{argmax}}
\newcommand{\poly}{\mathrm{poly}}
\def\cE{\mathcal{E}}
\def\cC{\mathcal{C}}
\def\cO{\mathcal{O}}
\def\hV{\widehat{V}}
\def\hQ{\widehat{Q}}
\newcommand{\bN}{\mathbb{N}}
\newcommand{\abs}[1]{\left|#1\right|}
\newcommand{\expect}[1]{\mathbb{E}\!\left[#1\right]}
\newcommand{\prob}{\mathbb{P}}
\newcommand{\indict}[1]{\mathbb{I}\!\left[#1\right]}
\newcommand{\clip}[2]{\mathrm{clip}\!\left[#1\Big| #2\right]}
\newcommand{\algo}{\mathsf{Alg}}
\newcommand{\states}{\mathcal{S}}
\newcommand{\trans}{P}
\newcommand{\actions}{\mathcal{A}}
\newcommand{\mdps}{\mathcal{M}}
\newcommand{\gapmin}{\Delta_{\min}}
\newcommand{\gap}{\Delta}
\newcommand{\regret}{\mathrm{Regret}}
\newcommand{\discount}{\left(1-\gamma\right)}
\newcommand{\gaph}[1]{\Delta_{#1}}
\newcommand{\xah}[1]{(x_{#1},a_{#1})}
\newcommand{\xahk}[2]{(x_{#1}^{#2},a_{#1}^{#2})}
\newcommand{\Qxah}[1]{Q_{#1}\!(x_{#1},a_{#1})}
\newcommand{\Qxahk}[2]{Q_{#1}^{#2}\!(x_{#1}^{#2},a_{#1}^{#2})}
\newcommand{\Pxahk}[2]{P_{#1}\!\left(\cdot|x_{#1}^{#2},a_{#1}^{#2}\right)}
\newcommand{\rxah}[1]{r_{#1}\!(x_{#1},a_{#1})}
\newcommand{\Vxh}[1]{V_{#1}\!(x_{#1})}
\newcommand{\Vxhk}[2]{V_{#1}^{#2}\!(x_{#1}^{#2})}
\newcommand{\Nxah}[1]{N_{#1}\!(x_{#1},a_{#1})}
\newcommand{\Nxahk}[2]{N_{#1}^{#2}\!(x_{#1}^{#2},a_{#1}^{#2})}
\newcommand{\Qxa}[1]{Q\!(x_{#1},a_{#1})}
\newcommand{\rxa}[1]{r\!(x_{#1},a_{#1})}
\newcommand{\Nxa}[1]{N\!(x_{#1},a_{#1})}
\newcommand{\Vhatx}[1]{\widehat{V}\!(x_{#1})}
\newcommand{\Qhatxa}[1]{\widehat{Q}\!(x_{#1},a_{#1})}
\newcommand{\circled}[1]{\lower.7ex\hbox{\tikz\draw (0pt, 0pt)%
		circle (.5em) node {\makebox[1em][c]{\small #1}};}}
\newtheorem{theorem}{Theorem}[section]
\newtheorem{lemma}[theorem]{Lemma}
\newtheorem{defn}{Definition}[section]
\newtheorem{definition}{Definition}[section]
\begin{document}

%

%

\twocolumn[

\aistatstitle{$Q$-learning with Logarithmic Regret}

\aistatsauthor{ Kunhe Yang \And Lin F. Yang \And  Simon S. Du}

\aistatsaddress{ \texttt{ykh17@mails.tsinghua.edu.cn}\\
Tsinghua University \And  
\texttt{linyang@ee.ucla.edu}\\
University of California, Los Angeles \And 
\texttt{ssdu@cs.washington.edu}\\
University of Washington }]

\begin{abstract}
This paper presents the first non-asymptotic result showing a model-free algorithm can achieve logarithmic cumulative regret for episodic tabular reinforcement learning if there exists a strictly positive sub-optimality gap. We prove that the optimistic $Q$-learning studied in [Jin et al. 2018] enjoys a ${\mathcal{O}}\!\left(\frac{SA\cdot \mathrm{poly}\left(H\right)}{\Delta_{\min}}\log\left(SAT\right)\right)$ cumulative regret bound where $S$ is the number of states, $A$ is the number of actions, $H$ is the planning horizon, $T$ is the total number of steps, and $\Delta_{\min}$ is the minimum sub-optimality gap of the optimal Q-function. This bound matches the  lower bound in terms of $S,A,T$ up to a  $\log\left(SA\right)$ factor. We further extend our analysis to the discounted setting and obtain a similar logarithmic cumulative regret bound.
\end{abstract}

\section{Introduction}
$Q$-learning~\citep{watkins1992q} is one of the most popular classes of methods for solving reinforcement learning (RL) problems. 
$Q$-learning tries to estimate the optimal state-action value function ($Q$-function).
With a $Q$-function, at every state, one can just greedily choose the action with the largest $Q$ value to interact with the RL environment.
Compared to another popular class of methods, model-based learning, $Q$-learning algorithms (or more generally, model-free algorithms) often enjoy better memory and time efficiency\footnote{See Section~\ref{sec:pre} for the precise definitions of model-free and model-based algorithms in the tabular setting.}.
These are the main reasons why $Q$-learning is applied in solving a wide range of RL problems~\citep{mnih2015human}.

While model-free methods are widely applied in practice, most theoretical works study model-based RL.
In one of the most fundamental RL frameworks, tabular RL, which is the focus of this paper, the majority of works study model-based algorithms~\citep{kearns1999finite, kakade2003sample, singh1994upper, azar2013minimax,azar2017minimax,dann2015sample,dann2017unifying,agarwal2019optimality,max2019nonasymptotic} with a few exceptions \citep{strehl2006pac,jin2018qlearning,dong2019qlearning,zhang2020optimal}.
From a regret minimization point of view, the state-of-the-art analysis demonstrates that one can achieve a $\sqrt{T}$-type regret bound where $T$ is the number of episodes.
Although these bounds are sharp in the worst-case scenario, they do not reveal the favorable
structures of the environment, which can significantly decrease the regret.

One such structure is the existence of a strictly positive sub-optimality gap, i.e., for every state, there is a strictly positive value gap between the optimal action(s) and the rest (cf. Definition~\ref{defn:gap}).
In practice, arguably, nearly all environments with finite action sets satisfy some sub-optimality gap conditions. 
In Atari-games, e.g., Freeway, the optimal action has a value that is usually very distinctive from the rest of actions.
In many other environments with finite number of actions, e.g. those control environments in OpenAI gym \citep{1606.01540},  the gap condition usually holds.
Similar gap conditions can be observed in other environments (see e.g. \cite{kakade2003sample}).

Theoretically, the sub-optimality gap is extensively investigated in the bandit problems, which can be viewed as RL problems with the planning horizon being $1$. 
With this structure, one can drastically decrease the $\sqrt{T}$-type regret to $\log T$-type regret~\citep{bubeck2012regret,lattimore2018bandit,slivkins2019introduction}.
For RL, most existing works that can leverage this structure require additional assumptions about the environment, such as finite hitting time and ergodicity~\citep{jaksch2010near,tewari2007reinforcement,ok2018exploration} or access to a generator~\citep{zanette2019almost}.\footnote{The simulator allows the user to query any state-action pair.}
Recently, \citet{max2019nonasymptotic} presented a systematic study of episodic tabular RL with the gap structure.
They presented a novel algorithm which achieves the near-optimal $\sqrt{T}$-type regret in the worst scenario and $\log T$-type regret if there exists a strictly positive sub-optimality gap.
Furthermore, they also provided instance-dependent lower bounds for a class of reasonable algorithms.
See Section~\ref{sec:rel} for more detailed discussions.

However, to our knowledge, all existing works that obtain $\log T$-type regret bounds are about model-based algorithms.
It remains open whether model-free algorithms such as $Q$-learning can achieve $\log T$-type regret bounds.
Indeed, this is a challenging task.
As discussed in \cite{max2019nonasymptotic}, their analysis framework cannot be applied to model-free algorithms directly.
Later in this section, we also provide some technical explanations on why their approach is difficult to adopt.

\paragraph{Our Contributions}
We answer the aforementioned open problem by proving that the optimistic $Q$-learning algorithm studied in \cite{jin2018qlearning} enjoys $\cO\!\left(\frac{SA H^6}{\gapmin}\log \left(SAT\right)\right)$ cumulative regret where $S$ is the number states, $A$ is the number of actions, $H$ is the planning horizon and $\gapmin$ is the minimum sub-optimality gap.
To our knowledge, this is the first result showing model-free algorithms can achieve $\log T$-type regret.
Furthermore, our bound matches the lower bound by~\cite{max2019nonasymptotic} in terms of $S$, $A$ and $T$ up to a $\log\left(SA\right)$ factor.
Importantly, the algorithm does not need to know $\gapmin$.

Second, we extend our analysis to the infinite-horizon discounted setting with the regret defined in \cite{liu2020regret}, for which we show the optimistic $Q$-learning achieves $\cO\!\left(\frac{SA}{\gapmin\left(1-\gamma\right)^6}\log\left(\frac{SAT}{\gapmin \left(1-\gamma\right)}\right)\right)$ regret where $0 \!<\! \gamma\! <\! 1$ is the discount factor.

\paragraph{Main Challenges}
\label{sec:challenge}

Here we explain the main challenges of using existing analyses and give an  overview of our main techniques at a high level.
The existing proof in \cite{jin2018qlearning} bounds the regret in terms of a weighted sum of the estimation error of $Q$-function.
Note the estimation error scales 
$1/\sqrt{T}$ which in turn gives a $\sqrt{T}$-type regret, but cannot give a $\log T$-type regret bound.

For model-based algorithms, \citet{max2019nonasymptotic} introduced a novel notion, \emph{optimistic surplus} (cf. Equation~\eqref{eqn:opt_surplus}),
 which can be bounded by the estimation error of the transition probability.
The logarithmic regret bound can be proved via a clipping trick on top of the optimistic surplus.

Unfortunately, as acknowledged by \citet{max2019nonasymptotic}, their analysis is highly tailored to model-based algorithms.
First, model-free algorithms do not estimate the probability transition, so we cannot bound the optimistic surplus via this approach.
Secondly, although we can also obtain a formula for the optimistic surplus in each episode using the update rules of the $Q$-learning algorithm, the formula depends on the estimation error of $Q$-function in previous episodes.
This dependency makes it difficult to bound the optimistic surplus.
See Section~\ref{sec:opt_surplus} for more technical details.

\paragraph{Technique Overview}
In this paper, we adopt an entirely different \emph{counting} approach. We first write the total regret as expected sum over sub-optimality gaps appearing in the whole learning process, then use the estimation error of $Q$-function and the definition of sub-optimality gap to upper bound the number of times the algorithm takes suboptimal actions.

To obtain a sharp dependency on $\gapmin$, we divide the interval $[\gapmin,H]$ (the range of all gaps) into multiple subintervals.
We then bound the sum of learning error in each subinterval by its maximum value times the number of steps falling into this subinterval.
The number of steps in each layer is bounded through computing the weighted sum of  learning error across all the episodes $k\in[K]$.
See detailed discussion in Lemma~\ref{lemma:weighed-sum-learning-error} and Lemma~\ref{lemma:count-in-each-layer}.

\paragraph{Organization}
This paper is organized as follows.
In Section~\ref{sec:rel} we discuss related works.
In Section~\ref{sec:pre}, we introduce necessary definitions and backgrounds.
In Section~\ref{sec:main_results}, we present our main results and discussions.
In Section~\ref{sec:proof_sketch}, we give the proof of our theorem on the episodic setting.
We conclude in Section~\ref{sec:conclusion} and leave remaining proofs to the appendix.

\label{sec:intro}
\subsection{Related Work}
\label{sec:rel}
\paragraph{Gap-independent Finite-horizon and Infinite-horizon Discounted  RL}\footnote{There is another line of works on gap-independent infinite-horizon average-reward setting. This setting is beyond the scope of this paper.}
There is a long list of results about regret or sample complexity of tabular RL, dating back to \cite{singh1994upper}.
One line of works require access to a simulator where the agent can query samples freely from any state-action pair of the environment and therefore the agent does not need to design a strategy to explore the environment.~\citep{kearns1999finite, kakade2003sample, singh1994upper, azar2013minimax, sidford2018variance, sidford2018near,agarwal2019optimality,zanette2019almost,li2020breaking}.

Another line of works drop the simulator assumption and thus the agent needs to use advanced techniques, such as upper confidence bound (UCB)  to explore the state space~\citep{azar2017minimax,dann2015sample,dann2017unifying,dann2019policy,jin2018qlearning,strehl2006pac,zhang2020optimal,max2019nonasymptotic,zanette2019tighter,dong2019qlearning}.
In terms of the regret, the state-of-art result shows one can achieve $\widetilde{\cO}\!\left(\sqrt{SAH^2T} + \poly\left(S,A,H\right) \right)$ regret for which the first term nearly match the $\Omega\left(\sqrt{SAH^2T}\right)$up to logarithmic factors~\citep{dann2015sample,osband2016on}.\footnote{
	In this paper, we study the same setting as in \cite{jin2018qlearning} where the reward at each level is in $[0,1]$, and the transition probabilities at each level can be different.
	In another setting, the total reward is bounded by $1$ and the transition probabilities at each level are the same.
	The latter setting is more challenging to analyze and the worst-case sample complexity is still open~\citep{jiang2018open,wang2020long}.
	}
Among these results, only a few are  for model-free algorithms~\citep{strehl2006pac,jin2018qlearning,dong2019qlearning,zhang2020optimal} and only very recently, \citet{jin2018qlearning,zhang2020optimal} showed $Q$-learning can achieve $\sqrt{T}$-type regret bounds.

\paragraph{Sub-optimality Gap}
The results about gap-dependent regret bounds for MDP algorithms can be categorized into  asymptotic bounds and non-asymptotic bounds. 
Asymptotic bounds are only valid when the total number of steps $T$ is large enough. 
These bounds often suffer from the worst-case dependency on some problem-specific quantities, such as diameter
and worst-case hitting time.
Under the infinite-horizon average-reward setting, \citet{auer2007logarithmic} provided a logarithmic regret algorithm for irreducible MDPs. 
Besides dependency on hitting times, their regret also depends inversely on $\gap_*^2$, the squared distance between optimal and second-optimal policy.
Along this direction and improving over previous algorithm of \citet{burnetas1997optimal}, \citet{tewari2008optimistic} proposed an algorithm called Optimistic Linear Programming (OLP). OLP is proved to have $C(P)\log T$ regret asymptotically in $T$, where $C(P)$ depends on some diameter-related quantity as well as the sum over reciprocals of gaps for $(x,a)$ inside a critical set. 

For non-asymptotic bounds, \citet{jaksch2010near}  introduced \textsc{UCRL2} algorithm, which enjoys $\widetilde{\cO}\!\left(\frac{D^2S^2A}{\gap_*}\log T\right)$ regret where $D$ is the diameter. More recently, \cite{ok2018exploration} derived problem-specific lower bounds for both structured and unstructured MDPs. Their lower bound scales $SA\log T$ for unstructured MDP and $c\log T$ for structured MDP, where this $c$ depends on both the minimal action sub-optimality gap and the span of bias function, which can be bounded by diameter $D$.
For non-asymptotic bounds, \cite{max2019nonasymptotic} proved that model-based optimistic algorithm \textsf{StrongEuler} has gap-dependent regret bound that holds uniformly over $T$. Moreover, their bounds depend only on $H$ and not on any term such as hitting time or diameter.
In Section~\ref{sec:main_results}, we compare our result with the one in \cite{max2019nonasymptotic} in more detail.



\section{Preliminaries}
\label{sec:pre}
\paragraph{Episodic MDP}
An episodic Markov decision process (MDP) is a tuple $\mdps:=\left(\states,\actions,H,\trans,r\right)$, where $\states$ is the finite state space with $\abs{\states}=S$, $\actions$ is the finite action space with $\abs{\actions}=A$, $H\in\mathbb{Z}_+$ is the planning horizon, $\trans_h:\states\times\actions\to\Delta(\states)$ is the transition operator at step $h$ that takes a state-action pair and returns a distribution over states, and $r_h:\states\times\actions\to [0,1]$ is the deterministic reward function at step $h$. Each episode starts at an initial state $x_1\in\states$ picked by an adversary.

In this paper, we focus on deterministic policies.
A deterministic policy $\pi$ is a sequence of mappings $\pi_h:\states\to\actions$ for $h=1,\ldots,H$. 
Given a policy $\pi$, for a state $x \in \states$, the value  function of state $x\in\states$ at the $h$-step  is defined as 
\[
	V_h^{\pi}(x):=\expect{\sum_{h'=h}^{H} r_{h'}(x_{h'},\pi_{h'}(x_{h'}))
		\Bigg| 
		x_h=x},
\] and the associated $Q$-function of a state-action pair $(x,a)\in\states\times\actions$ at the $h$-step is 
\begin{align*}
	Q_h^{\pi}(x,a):=r_{h}&(x,a)\\
	+&\expect{\sum_{h'=h+1}^{H} r_{h'}(x_{h'},\pi_{h'}(x_{h'}))
		\Bigg| 
		{x_h=x\atop a_h=a}}.
\end{align*}
We let $\pi^*$ be the optimal policy such that $V^{\pi^*}(x)=V^*(x)=\argmax_{\pi}V^{\pi}(x)$ and $Q^{\pi^*}(x,a)=Q^*(x,a)=\argmax_{\pi}Q^{\pi}(x,a)$ for every $(x,a)$.
For episodic MDP, the agent interacts with the MDP for $K \in \mathbb{Z}^+$ episodes. 
For each episode $k = 1,\ldots,K$, the learning algorithm $\algo$ specifies a policy $\pi^k$, plays $\pi^k$ for $H$ steps and observes trajectory $(x_1,a_1),\cdots,(x_H,a_H)$. 
The total number of steps is $T=KH$, and the total regret of an execution instance of $\algo$ is then \[
 \regret(K)=\sum_{k=1}^{K}\left(V_1^*-V_1^{\pi^k}\right)\!(x_1^k).
\]
In this paper we focus on bounding the expected regret $\expect{\regret(K)}$ where the expectation is over the randomness from the environment.

\paragraph{Model-free Algorithm V.S. Model-based Algorithm}
In this paper we focus on \emph{model-free} $Q$-learning algorithms.
Formally, by model-free algorithms, we mean the space complexity of the algorithm scales at most \emph{linearly} in $S$ in contrast to the model-based algorithms whose space complexity often scales \emph{quadratically} with $S$~\citep{strehl2006pac,sutton1998reinforcement,jin2018qlearning}.
For episodic MDP, we will analyze the $Q$-learning with UCB-Hoeffding algorithm studied in \cite{jin2018qlearning} (cf. Algorithm~\ref{algo:ucb-q}).
At a high level, this algorithm maintains an upper bound of $Q^*$ for every $(s,a)$ pair and choose the action greedily at every episode.
The algorithm uses a carefully designed step size sequence $\{\alpha_k\}$ to update the upper bound based on the observed data.
\citet{jin2019provably} proved that Algorithm~\ref{algo:ucb-q} enjoys $\!\left(\sqrt{H^4SAT\log\left(SAT\right)}\right)$ regret, which is the first $\sqrt{T}$-type bound for model-free algorithms.

\begin{algorithm}[tb]
	\caption{{Q-learning with UCB-Hoeffding}\label{algo:ucb-q}}
	\begin{algorithmic}[1]
		\State \textbf{Initialize:} $Q_h\!(x,a)\gets H$ and $N_h\!(x,a)\gets0$ for all $(x,a,h)\in\states\times\actions\times[H]$.
		\State \textbf{Define} $\alpha_t=\frac{H+1}{H+t}$,  $\iota\gets\log\left(SAT^2\right)$.
		\For{episode $k \in [K]$}
		\State receive $x_1$.
		\For{step $h\in[H]$}
		\State Take action $a_h\gets\argmax_{a'\in\actions}Q_h\!\left(x_h,a'\right)$, observe $x_{h+1}$.
		\State $t=\Nxah{h}\gets \Nxah{h}+1$,
		\State $b_t\gets c\sqrt{H^3\iota/t}$, \Comment{$c$ is a constant that can be set to 4.}
		\State $\Qxah{h}\gets\left(1-\alpha_t\right)\Qxah{h}+\alpha_t\left[\rxah{h}+\Vxh{h+1}+b_t\right]$,
		\State $\Vxh{h}\gets\min\left\{H,\max_{a'\in \actions} Q_h\!\left(x_h,a'\right)\right\}$.
		\EndFor
		\EndFor
	\end{algorithmic}
\end{algorithm}
\paragraph{Sub-optimality Gap}
Our paper investigates what structures of the MDP enable us to improve the $\sqrt{T}$-type bound.
In this paper we focus on the positive sub-optimality gap condition~\citep{max2019nonasymptotic,du2019q,du2020agnostic}.
%
\begin{defn}[Sub-optimality Gap]
	\label{defn:gap}
Given $h \in [H]$, $(x,a) \in \states \times \actions$, the suboptimality gap of $(x,a)$ at level $h$ is defined as
$\gaph{h}\!(x,a):=V_h^*(x)-Q_h^*(x,a). $
\end{defn}
\begin{defn}Minimum Sub-optimality Gap]
	\label{asmp:gapmin_pos}
Denote by $\gapmin$ the minimum non-zero gap: $\gapmin:=\min_{h,x,a}\left\{\gaph{h}(x,a):\gaph{h}(x,a)\neq0\right\}$. 
\end{defn}
Note that if $\left\{\gaph{h}(x,a):\gaph{h}(x,a)\neq0\right\}=\emptyset$, then all the states are the same, and the MDP degenerates.
Otherwise we always have $\gapmin > 0$. For the rest of the paper, we focus on the case when $\gapmin > 0$.
In Section~\ref{sec:intro} we have discussed why many MDPs admit this structure.
Our main result  is a logarithmic regret bound of Algorithm~\ref{algo:ucb-q}.

\paragraph{Infinite-horizon Discounted MDP}
In this paper we also study infinite-horizon discounted MDP, which is a tuple $\mdps:=\left(\states,\actions,\gamma,\trans,r\right)$, where every step shares the same transition operator $\trans$ and reward function $r$. Here $\gamma$ denotes the discount factor, and there is no restart during the entire process. 
Let $\cC=\left\{\states\times\actions\times[0,1]\right\}^{*}\times\states$ be the set of all possible trajectories of any length. 
A non-stationary deterministic policy $\pi:\cC\to\actions$ is a mapping from paths to actions. The $V$ function and $Q$ function are defined as below ($c_i:=\left(x_1,a_1,r_1,\cdots,x_i\right)$).
\begin{align}
V^{\pi}(x)&:=\expect{\sum_{i=1}^{\infty}\gamma^{i-1}r(x_i,\pi(c_i))
	\Bigg| 
	x_1=x},\nonumber\\
Q^{\pi}(x,a)&:=\rxa{}+\expect{\sum_{i=2}^{\infty}\gamma^{i-1} r(x_{i},\pi(c_{i}))
	\Bigg| 
	{x_1\!=\!x\atop a_1\!=\!a}}.\nonumber
\end{align}
	Let $V^*(s)$ and $Q^*(s,a)$  denote respectively the value function and $Q$ function of the optimal policy $\pi^*$.
\begin{defn}[Sub-optimality Gap]
	\label{defn:discounted-gap}
	Given $(x,a) \in \states \times \actions$, the suboptimality gap of $(x,a)$ is defined as
	$\gaph{}\!(x,a):=V^*(x)-Q^*(x,a). $
\end{defn}
\begin{defn}[Minimum Sub-optimality Gap]
	\label{asmp:disounted-gapmin_pos}
	Denote by $\gapmin$ the minimum non-zero gap: $\gapmin:=\min_{x,a}\left\{\gaph{}(x,a):\gaph{}(x,a)\neq0\right\}$. 
\end{defn}

Again, if $\left\{\gaph{}(x,a):\gaph{}(x,a)\neq0\right\}=\emptyset$, all the states are the same, and the MDP degenerates.
Otherwise, we have $\gapmin > 0$.

Consider a game that starts at state $x_1$. A learning algorithm $\algo$ specifies an initial non-stationary policy $\pi_1$. 
At each time step $t$, the player takes action $\pi_t(x_t)$, observes $r_t$ and $x_{t+1}$, and updates $\pi_t$ to $\pi_{t+1}$.
The total regret of $\algo$ for the first $T$ steps is thus defined as $
\regret(T)=\sum_{t=1}^{T}\left(V^*-V^{\pi_t}\right)\!(x_t).
$
This definition was studied in \cite{liu2020regret}, which follows the sample complexity definition in \cite{kakade2003sample}.
For this setting, we study Algorithm~\ref{algo:infinite ucb-q}.
This is a simple adaptation of Algorithm~\ref{algo:ucb-q} that takes $\gamma$ into account, so we defer it to the appendix.
We prove Algorithm~\ref{algo:infinite ucb-q} also enjoys a logarithmic regret bound.

\section{Main Theoretical Results}
\label{sec:main_results}
Now we present our main results.

\paragraph{Main Result for Episodic MDP}
The following theorem characterizes the performance of Algorithm~\ref{algo:ucb-q} for episodic MDP. 
To our knowledge, this is the first theoretical result showing a model-free algorithm can achieve logarithmic regret of tabular RL.
\begin{theorem}[Logarithmic Regret Bound of $Q$-learning for Episodic MDP]
	\label{thm:episodic}
	The expected regret of Algorithm~\ref{algo:ucb-q} for episodic tabular MDP is upper bounded by
	 $\expect{\regret(K)}\le\cO\left(\frac{H^6SA}{\gapmin}
	\log\left({SAT}\right)\right)$.
\end{theorem}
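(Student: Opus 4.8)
The plan is to follow the \emph{counting} strategy from the introduction rather than the model-based clipping argument of \citet{max2019nonasymptotic}. First I would condition on the high-probability event $\cE$ of \citet{jin2018qlearning}: under $\cE$ we have optimism, $\Qxahk{h}{k}\ge Q_h^*(x_h^k,a_h^k)$ for all $h,k$ (indeed for all state--action pairs), together with the per-step surplus bound $Q_h^k(x,a)-Q_h^*(x,a)\le \beta_t+\sum_{i=1}^t\alpha_t^i\big(V_{h+1}^{k_i}-V_{h+1}^*\big)(x_{h+1}^{k_i})$, where $t=N_h^k(x,a)$, $\beta_t\le\cO(\sqrt{H^3\iota/t})$, and $k_i$ is the episode of the $i$-th visit to $(x,a)$ at level $h$; the contribution of $\cE^c$ to $\expect{\regret(K)}$ is lower order by the choice of $\iota=\log(SAT^2)$. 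By the performance-difference lemma, $\expect{\regret(K)}=\expect{\sum_{k=1}^K\sum_{h=1}^H\gaph{h}(x_h^k,a_h^k)}$, and every summand is either $0$ or at least $\gapmin$. Writing $e_h^k:=\Qxahk{h}{k}-Q_h^*(x_h^k,a_h^k)\ge 0$, the greedy choice of $a_h^k$ together with optimism at an optimal action gives $e_h^k\ge\gaph{h}(x_h^k,a_h^k)$, so any step playing a suboptimal action has $e_h^k\ge\gapmin$.

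Next I would split the range $[\gapmin,H]$ of possible gaps into $L=\cO(\log(H/\gapmin))$ dyadic layers, the $\ell$-th being $[\gap_\ell,2\gap_\ell)$ with $\gap_\ell=2^{\ell-1}\gapmin$, and bound $\expect{\regret(K)}\le\sum_{\ell=1}^L 2\gap_\ell\cdot\expect{M_\ell}$, where $M_\ell$ counts the steps whose gap lies in layer $\ell$. A step counted in $M_\ell$ has $e_h^k\ge\gap_\ell$, hence $\beta_{\Nxahk{h}{k}}+\sum_i\alpha_{\Nxahk{h}{k}}^i\phi_{h+1}^{k_i}\ge\gap_\ell$, where $\phi_h^k:=\big(\Vxhk{h}{k}-V_h^*\big)(x_h^k)\ge 0$. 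When the bonus carries a $\Theta(1/H)$-fraction of the mass, $\Nxahk{h}{k}=\cO(H^5\iota/\gap_\ell^2)$, so such steps number at most $\cO(SAH^6\iota/\gap_\ell^2)$ over all $(x,a,h)$; when instead the propagated term carries the mass, I would push the count one level deeper, using the reindexing identity $\sum_k\sum_i\alpha_{N_h^k(x,a)}^i\, g(k_i)=(1+1/H)\sum_k g(k)$ (sum over episodes visiting a fixed $(x,a)$ at level $h$, $g$ an indicator of a large-$\phi_{h+1}$ event) to trade a weighted count of large $\phi_{h+1}$-events at level $h$ against a plain count of such events at level $h+1$. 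Iterating down the $H$ levels (at level $H$, $\phi_{H+1}\equiv 0$, so only bonuses remain) and adding the bonus contributions yields $\expect{M_\ell}\le\cO(SAH^6\iota/\gap_\ell^2)$; then $\sum_\ell 2\gap_\ell\expect{M_\ell}\le\cO(SAH^6\iota)\sum_\ell\gap_\ell^{-1}\le\cO(SAH^6\iota/\gapmin)$ since $\sum_\ell\gap_\ell^{-1}$ is geometric, $\iota=\cO(\log(SAT))$, the factor $L$ is absorbed into the constant, and the $\cE^c$ term is dominated. This is where Lemma~\ref{lemma:weighed-sum-learning-error} and Lemma~\ref{lemma:count-in-each-layer} do the work.

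The step I expect to be the main obstacle is exactly this level-by-level recursion for $\expect{M_\ell}$. The difficulty, already flagged by \citet{max2019nonasymptotic}, is that the $Q$-learning surplus $e_h^k$ is expressed through value estimates $V_{h+1}^{k_i}$ from \emph{earlier} episodes, so there is no single current value error at level $h+1$ to recurse on and no clean model-based clipping identity. A careless recursion that at each level splits ``bonus versus propagated'' with a constant-factor slack in the threshold shrinks the effective threshold geometrically in $H$ and blows the bound up to $\exp(\Theta(H))$. The crux is to keep the threshold fixed at scale $\gap_\ell/\Theta(H)$ throughout --- spending the whole per-layer budget $\gap_\ell$ once, spread uniformly across the $H$ levels --- so that the $H$ recursion steps amplify by only $\mathrm{poly}(H)$ (ultimately $H^6$), while the weighted $\alpha$-sums over earlier episodes telescope via the identity above into plain counts without leaking a $\sqrt{T}$-type term.
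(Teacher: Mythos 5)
Your proposal is correct and takes essentially the same route as the paper: the same gap decomposition of the regret, the same concentration/optimism event bounding each gap by the $Q$-learning error, the same dyadic layering of $[\gapmin,H]$, and the same counting argument in which indicator weights are fed into the weighted-sum-of-learning-errors recursion whose $(1+1/H)$-controlled weights prevent exponential blow-up over the $H$ levels (i.e., Lemmas~\ref{lemma:weighed-sum-learning-error} and~\ref{lemma:count-in-each-layer}, which you invoke explicitly). The only cosmetic difference is that you stratify steps by the value of the gap rather than by the value of the learning error; since optimism makes the gap a lower bound on the learning error, this yields the same per-layer count $\cO(SAH^6\iota/\gap_\ell^2)$ and the same geometric sum.
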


An interesting advantage of our theorem is adaptivity.
Note the algorithm we analyze is exactly the same algorithm studied in \cite{jin2019provably}, which has been shown to achieve the worst-case $\sqrt{T}$-type regret bound.
Theorem~\ref{thm:episodic} suggests that one does not need to modify the algorithm to exploit the strictly positive minimum sub-optimality gap structure, Algorithm~\ref{algo:ucb-q} \emph{automatically adapts} to this benign structure. 
Importantly, Algorithm~\ref{algo:ucb-q} does not need to know $\gapmin$.

Proposition 2.2 in \cite{max2019nonasymptotic} suggested that any algorithm with sub-linear regret in the worst case, suffer an $\Omega\left(\sum_{(x,a), \gap_1(x,a) > 0}\frac{H^2}{\gap_1\left(x,a\right)} \log T\right)$ expected regret.
Therefore, the dependencies on $S$, $A$ and $T$ are nearly tight in Theorem~\ref{thm:episodic}.

One may wonder whether it is possible to obtain a regret bound that only depends the sum of positive gaps, e.g., $O\left(\sum_{(x,a), \gap_1(x,a) > 0}\frac{H^2}{\gap_1\left(x,a\right)} \log T\right)$, unlike ours, which is a multiple of $1/\gapmin$.
Unfortunately, \citet{max2019nonasymptotic} showed, all existing algorithms, including Algorithm~\ref{algo:ucb-q} and their algorithm, suffer an $\Omega\left(\frac{S}{\gapmin}\right)$ regret, and new algorithmic ideas are needed in order to circumvent this lower bound.



We compare Theorem~\ref{thm:episodic} with the regret bound for model-based algorithm in \cite{max2019nonasymptotic} (in big-$\cO$ form):\begin{align*}
\Bigg(\!&\!\sum_{\substack{(x,a): \\\exists h \in [H], \gap_{h}\!(x,a) >0 }}\!\frac{H^3}{\min_h \gap_h\left(x,a\right)} + \frac{SH^3}{\gap_{\min}} \\
&+ H^4SA\max\left(S,H\right) \log\!\left(\!\frac{SAH}{\gapmin}\!\right)\Bigg)\log\left(SAHT\right) 
\end{align*}

First recall our bound is for a model-free algorithm which is more space-efficient and time-efficient than the model-based algorithm in \cite{max2019nonasymptotic}.
In terms of the regret bound, Theorem~\ref{thm:episodic}'s dependency on $H$ is worse than that in their bound.
We remark that simple model-free algorithms may have a worse dependency on $H$ compared to model-based algorithms (e.g., see \cite{jin2018qlearning}).

%
%
%


Now let us consider an environment where there are $\sim SA$ state-action pairs whose gap is $\gapmin$.
Then the bound in \cite{max2019nonasymptotic} becomes \[
\left(\!\frac{H^3SA}{\gapmin} \!+\! H^4SA \max\!\left\{S,\!H\right\}\!\log\!\left(\!\frac{SAH}{\gapmin}\!\right)\!\right)\! \log\!\left(\!SAHT\!\right).
\]
In this regime, both Theorem~\ref{thm:episodic} and their bound have an $\frac{SA}{\gapmin}$ term. 
Their bound also has an additional $H^4 SA\max\left(H,S\right) \log\left(\frac{SAH}{\gapmin}\right)$ burn-in term which our bound does not have.
When $S$ is large compared to $H$ and $\gap_{\min}$, this term scales $S^2$ and can dominate other terms, so our bound is better.
The technical reason behind this phenomenon is that Algorithm~\ref{algo:ucb-q} uses the Hoeffding bound for constructing bonus on $Q$-value, which does not need burn-in.


\paragraph{Main Result for Infinite-horizon Discounted MDP}
Algorithm~\ref{algo:ucb-q} can be easily generalized to the discounted MDP.
See Algorithm~\ref{algo:infinite ucb-q} in the appendix.
We also obtain a logarithmic regret bound for infinite-horizon discounted MDP.

\begin{theorem}[Logarithmic Regret Bound of $Q$-learning for Infinite-horizon Discounted MDP]
	\label{thm:discounted}
	The expected regret of Algorithm~\ref{algo:infinite ucb-q} for infinite-horizon discounted MDP is upper bounded by 
	$\expect{\regret(T)}\le\cO\left(\frac{SA}{\gapmin\discount^6}
	\log\frac{SAT}{\gapmin\left(1-\gamma\right)}\right)$.
\end{theorem}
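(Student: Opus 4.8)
The plan is to transport the counting argument behind Theorem~\ref{thm:episodic} to the discounted setting, with the effective horizon $\widetilde H:=\tfrac{1}{1-\gamma}$ replacing $H$ and with a truncation step to tame the non-terminating bootstrap. First I would establish the discounted analogue of the algorithm's core guarantee: with the step size $\alpha_t=\tfrac{\widetilde H+1}{\widetilde H+t}$ and a Hoeffding bonus $b_t$ of order $\widetilde H\sqrt{\widetilde H\,\iota/t}$ (scaled so that the per-sample range $\widetilde H$ of the target $r+\gamma V(x')$ is absorbed), prove by induction on the step index that $Q_t(x,a)\ge Q^*(x,a)$ for all $(x,a)$ (optimism) and that after $n$ visits to $(x,a)$ the overestimation $Q_t(x,a)-Q^*(x,a)$ is at most a convex combination of the value overestimations $V(x')-V^*(x')$ at the successor states seen on those visits, plus an additive $\widetilde{\cO}(\widetilde H\sqrt{\widetilde H\,\iota/n})$ term from the accumulated bonuses and an Azuma martingale. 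This mirrors the corresponding lemma of \citet{jin2018qlearning}; the one new ingredient is that the successor-state recursion does not terminate, so I would unroll it for $\widetilde H_0:=\Theta\!\big(\widetilde H\log\tfrac{1}{(1-\gamma)\gapmin}\big)$ levels and bound the tail by $\gamma^{\widetilde H_0}\widetilde H\le\gapmin$ using the contraction carried by each $\gamma$ factor --- this is the source of the extra $\log\tfrac{SAT}{\gapmin(1-\gamma)}$ in the statement.

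Next I would set up the regret decomposition. Expanding $(V^*-V^{\pi_t})(x_t)$ through the Bellman equations gives the discounted sum of sub-optimality gaps $\mathbb{E}_{\pi_t}\!\big[\sum_{i\ge1}\gamma^{i-1}\gaph{}(x_{t+i-1},a_{t+i-1})\big]$ accrued along the counterfactual trajectory of the policy $\pi_t$ frozen at time $t$; truncating this sum at depth $\widetilde H_0$ costs an additive $\gapmin$ per step. Since $\pi_t$ is greedy for $Q_t$ and $Q_t\ge Q^*$, whenever $\pi_t$ picks a sub-optimal action $a$ at a state $x$ we have $Q_t(x,a)\ge\max_{a'}Q_t(x,a')\ge V^*(x)$, so the overestimation there is at least $\gaph{}(x,a)\ge\gapmin$; combined with the previous step, each occurrence of a positive gap $\gaph{}(x,a)$ in the (expected, truncated) regret sum is witnessed by a visit at which the overestimation is at least that gap, which, after resolving the recursion exactly as in Lemma~\ref{lemma:weighed-sum-learning-error}, forces the visit count to be at most $\widetilde{\cO}(\widetilde H^{\Theta(1)}\iota/\gaph{}(x,a)^2)$.

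Finally I would layer the gaps: partition $[\gapmin,\widetilde H_0]$ into $O(\log\tfrac{\widetilde H_0}{\gapmin})$ geometric bins, bound the contribution of each bin by its largest gap times the number of steps whose current state--action pair has a gap in that bin, and bound that count by summing the per-pair visit bound over all $SA$ pairs --- the discounted version of Lemma~\ref{lemma:count-in-each-layer}. The resulting geometric series is dominated by the bin nearest $\gapmin$ and collapses to $\widetilde{\cO}\big(\tfrac{SA\,\widetilde H^{6}}{\gapmin}\big)=\widetilde{\cO}\big(\tfrac{SA}{\gapmin(1-\gamma)^6}\big)$, with the logarithmic factor coming from $\iota$, the number of bins and the truncation depth. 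The step I expect to be the main obstacle is the regret decomposition: there is no restart and the executed policy is re-updated every step, so $(V^*-V^{\pi_t})$ does not telescope along the realized trajectory; making the counterfactual expansion rigorous and showing that both the depth-$\widetilde H_0$ truncation and the per-step policy change only contribute lower-order error is the delicate part, while the non-terminating $Q$-learning concentration is a second, more routine, point that the effective-horizon truncation already handles.
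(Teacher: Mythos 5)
Your plan is essentially the paper's proof: optimism plus a bounded-learning-error concentration event, a recursion on the successor-state value errors unrolled for $H=\Theta\!\big(\tfrac{1}{1-\gamma}\log\tfrac{1}{(1-\gamma)\gapmin}\big)$ levels with the tail $\gamma^{H}/(1-\gamma)$ absorbed into $\gapmin/2$, a weighted-sum lemma with indicator weights, and geometric binning of the gaps. Two points of divergence are worth flagging. First, the paper does \emph{not} expand $(V^*-V^{\pi_t})(x_t)$ along a counterfactual trajectory of a frozen policy: under the regret definition of Liu et al., $\pi_t$ is the non-stationary policy the algorithm actually continues to execute (including future updates), so the gap decomposition lands on the \emph{realized} state--action pairs; the paper then simply interchanges the two summations, bounds $\sum_{t\le\min\{T,h'\}}\gamma^{h'-t}$ by $\tfrac{1}{1-\gamma}$, and feeds the realized visits into the counting lemma. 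Your counterfactual framing would decouple the gaps from the realized visits and break the counting argument, and the truncation-plus-policy-drift repair you sketch is both harder and unnecessary; the only truncation in the paper happens inside the weighted-sum lemma. Second, in the discounted recursion the bootstrapped value at the $i$-th visit is $\hV^{\tau(x,a,i)}$ evaluated at the successor state, while the next link of the recursion lives at time $\tau(x,a,i)+1$; the paper must control the drift $\sum_t\widetilde w_{t+1}\big(\hV^{t}-\hV^{t+1}\big)(x_t)$ via monotonicity of $\hV$ and telescoping, contributing an extra $\cO\!\big(\tfrac{wS}{1-\gamma}\big)$ term that your outline omits. Neither issue changes the architecture of the argument, but both need to be handled for the proof to close.
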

Theorem~\ref{thm:discounted} suggests that model-free algorithms can achieve logarithmic regret even in the infinite-horizon discounted MDP setting.
The main difference from Theorem~\ref{thm:episodic} is that $H$ is replaced by $\frac{1}{1-\gamma}$.
By analogy, we believe the dependencies on $S,A,T$ and $\gapmin$ are nearly tight and the dependency $\frac{1}{1-\gamma}$ can be improved.
The proof of Theorem~\ref{thm:discounted} is deferred to Appendix.

\section{Proof of Theorem~\ref{thm:episodic}}
\label{sec:proof_sketch}
In this section, we prove Theorem~\ref{thm:episodic}.

\paragraph{Notations}
	Let $Q_h^k(x,a),V_{h}^k(x),N_h^k(x,a)$ denote the value of $Q_h(x,a),V_{h}(x)$,and $N_h(x,a)$ right before the $k$-th episode, respectively.
Let $\indict{\cdot}$ denote the indicator function.
Let $\tau_h(x,a,i):=\max\left\{k:N_h^k(x,a)=i-1\right\}$ be the episode $k$ at which $\xahk{h}{k}=(x,a)$ for the $i$-th time. We will abbreviate $\Nxahk{h}{k}$ for $n_h^k$ when no confusion can arise. 
$\alpha_t^i$ is defined by the following: $\alpha_t\!=\!\frac{H+1}{H+t}$, $\alpha_t^0\!=\!\prod_{j=1}^{t}\!\left(1\!-\!\alpha_{j}\right)$ and $\alpha_t^i\!=\!\alpha_i\!\prod_{j=i+1}^{t}\!\left(1\!-\!\alpha_{j}\right)\ (i>0)$. Let $\beta_0=0$ and $\beta_t\!=\!4c\sqrt{\!\frac{H^3\iota}{t}}$ for $t\ge1$.

\paragraph{Proof of Theorem~\ref{thm:episodic}}
Our proof starts with the observation that the regret of each episode can be rewritten as the expected sum of sub-optimality gaps for each action:
\begin{align}
&\left(V_1^*-V_1^{\pi^k}\right)\!\left(x_1^k\right)\nonumber\\
=&\ V_1^*\!\left(x_1^k\right)
-Q_1^*\!\left(x_1^k,a_1^k\right)
+\left(Q_1^*-Q_1^{\pi^k}\right)\!\xahk{1}{k}\nonumber\\
=&\ \gaph{1}\left(x_1^k,a_1^k\right)
+\mathbb{E}_{s'\sim \Pxahk{1}{k}}
\left[\left(V_{2}^*-V_{2}^{\pi^k}\right)\!(s')\right]\nonumber\\
=&\ \cdots\ =\expect{\sum_{h=1}^H
	\gaph{h}\!\xahk{h}{k}\Bigg|
	a_h^k=\pi_h^k(x_h^k)}.
\label{eq:episodic-regret-decomp}
\end{align}

In order to bound $\gaph{h}\!\xahk{h}{k}$ by learning error $\left(\!Q_h^k-Q_h^*\right)\!\xahk{h}{k}$, we define the following concentration event.
\begin{definition}[Concentration of Learning Errors]
	\begin{align*}
	\cE_{\mathrm{conc}}&:=\Bigg\{
	\forall (x,\!a,\!h,\!k)\!
	:
	0\le \left(Q_h^k\!-\!Q_h^*\right)\!(x,a)\le\nonumber\\
	&\alpha_{n_h^k}^0 H\!+\!\sum_{i=1}^{n_h^k}\alpha_{n_h^k}^i\!\left(\!V_{\!h+1}^{\!\tau_h(x,a,i)}\!-\!V^*\!\right)\!(\!x_{h+1\!}^{\!\tau_h\!(x,a,i)}\!)\!+\!\beta_{\!n_h^k}\Bigg\} .
\end{align*}
\end{definition}
Intuitively, $\cE_{\mathrm{conc}}$ is the event in which all the learning errors of the value function is both bounded below (by zero) and bounded above. 

We now refer to \cite{jin2018qlearning} for the following lemma that 
shows $\cE_{\mathrm{conc}}$ happens with high probability
via a concentration argument.
\begin{lemma}[Concentration]
	\label{lemma: conc-event}
	 Event $\cE_{\mathrm{conc}}$ occurs
	w.p. at least $1-\nicefrac{1}{T}$.
\end{lemma}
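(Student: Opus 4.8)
The plan is to establish Lemma~\ref{lemma: conc-event} by unrolling the $Q$-learning update rule in Algorithm~\ref{algo:ucb-q} and applying a Hoeffding-type concentration inequality together with a union bound. First I would recall the recursion satisfied by the step sizes: after the $(x,a)$ pair has been visited $t$ times at level $h$, the update gives $Q_h(x,a) = \alpha_t^0 H + \sum_{i=1}^{t}\alpha_t^i\bigl(r_h(x,a) + V_{h+1}^{\tau_h(x,a,i)}(x_{h+1}^{\tau_h(x,a,i)}) + b_i\bigr)$, using $\sum_{i=0}^{t}\alpha_t^i = 1$. Subtracting the Bellman optimality equation $Q_h^*(x,a) = r_h(x,a) + \mathbb{E}_{s'\sim\trans_h(\cdot|x,a)}V_{h+1}^*(s')$ and using $\sum_i \alpha_t^i = 1$ again, I would write
\begin{align*}
\left(Q_h^k - Q_h^*\right)(x,a) = \alpha_{n_h^k}^0 H &+ \sum_{i=1}^{n_h^k}\alpha_{n_h^k}^i\left(V_{h+1}^{\tau_h(x,a,i)} - V^*\right)(x_{h+1}^{\tau_h(x,a,i)}) \\
&+ \sum_{i=1}^{n_h^k}\alpha_{n_h^k}^i\left(V^*(x_{h+1}^{\tau_h(x,a,i)}) - \mathbb{E}_{s'\sim\trans_h(\cdot|x,a)}V^*(s')\right) + \sum_{i=1}^{n_h^k}\alpha_{n_h^k}^i b_i.
\end{align*}
The middle sum is a martingale difference sequence (each term is bounded by $H$ in absolute value and has conditional mean zero, since $x_{h+1}^{\tau_h(x,a,i)}\sim\trans_h(\cdot|x,a)$), so by an Azuma--Hoeffding bound it concentrates around zero on a scale controlled by $\sqrt{\sum_i (\alpha_{n_h^k}^i)^2 H^2}$. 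Two standard facts about the step sizes (both from \cite{jin2018qlearning}) are that $\sum_{i=1}^{t}(\alpha_t^i)^2 \le \frac{2H}{t}$ and $\frac{1}{\sqrt{t}} \le \sum_{i=1}^{t}\frac{\alpha_t^i}{\sqrt{i}} \le \frac{2}{\sqrt{t}}$; the first controls the martingale term and the second shows that the accumulated bonus $\sum_i \alpha_{n_h^k}^i b_i$ is of order $\sqrt{H^3\iota/n_h^k}$, which is exactly the definition of $\beta_{n_h^k}$ (up to the constant $c$). Choosing $c$ large enough, the martingale deviation is dominated by $\tfrac12\beta_{n_h^k}$, and a symmetric argument handles the lower bound: the accumulated bonus ensures $\left(Q_h^k - Q_h^*\right)(x,a) \ge 0$ because the negative martingale fluctuation is outweighed by $\sum_i\alpha_{n_h^k}^i b_i$ and by the inductively maintained nonnegativity of $\left(V_{h+1}^{\tau_h} - V^*\right)$.

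I would then union-bound over all $(x,a,h)$ and all values of the visit count $t\in[K]$ (equivalently over all $(x,a,h,k)$, of which there are at most $SAHK \le SAT^2$), which is why $\iota = \log(SAT^2)$ appears; choosing the constant in $b_t$ appropriately makes the total failure probability at most $1/T$. The nonnegativity $\left(Q_h^k-Q_h^*\right)(x,a)\ge 0$ (hence $\left(V_h^k - V_h^*\right)(x)\ge 0$ as well, after the $\min\{H,\cdot\}$ truncation, since $V^*\le H$) should be argued by induction on $h$ from $H$ down to $1$, simultaneously with the concentration bound, so that the $V_{h+1}$ terms appearing in the level-$h$ recursion are already known to dominate $V_{h+1}^*$.

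The main obstacle is handling the coupling between the martingale structure and the recursion: the "times" $\tau_h(x,a,i)$ are themselves random (they are determined by the algorithm's trajectory), so one has to be careful that $\{x_{h+1}^{\tau_h(x,a,i)}\}_i$ really does form a martingale-difference sequence with respect to the natural filtration — this requires noting that conditioned on the event $\{N_h^k(x,a)=i-1\ \text{at episode } \tau\}$, the next state is drawn fresh from $\trans_h(\cdot|x,a)$ independently of the past, so a stopping-time / optional-skipping argument (or the standard trick of defining the martingale over a fixed index set and invoking that $\tau_h(x,a,i)$ is a stopping time) is needed to legitimately apply Azuma--Hoeffding. Since the statement is quoted directly from \cite{jin2018qlearning}, I would cite their Lemma~4.3 (or the analogous statement) for this concentration step rather than reproving it, and simply note that their bound is exactly $\cE_{\mathrm{conc}}$ with our choice of $\beta_t$ and $\iota$.
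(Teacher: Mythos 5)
Your proposal is correct and matches the paper's treatment: the paper itself gives no proof of Lemma~\ref{lemma: conc-event}, deferring entirely to the concentration argument of \citet{jin2018qlearning} (their Lemma~4.3), which is exactly the unrolling-plus-Azuma--Hoeffding-plus-union-bound argument you sketch, with the same step-size identities and the same downward induction on $h$ for optimism. Your added remark about the stopping-time subtlety in applying Azuma--Hoeffding to the randomly indexed sequence $\{x_{h+1}^{\tau_h(x,a,i)}\}_i$ is a valid and correctly resolved concern, not a gap.
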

Lemma~\ref{lemma: conc-event} suggests that Algorithm~\ref{algo:ucb-q} is optimistic on $\cE_{\mathrm{conc}}$. Combining with the greedy choice of actions yields
\begin{align}
V_h^{*}\left(x_h^k\right)\!=\!
Q_h^{*}\left(x_h^k,a^{\!*}\!\right)\!\le\! Q_h^{k}\left(x_h^k,a^{\!*}\!\right) \!\le\! Q_h^{k}\left(x_h^k,a_h^k\right).
\label{ineq:optimistic-bound-on-Vopt}
\end{align}
To bound $\gap_h(x_h^k,a_h^k)$, the following notion introduced in \cite{max2019nonasymptotic} is convenient.
If we define $\clip{x}{\delta}\!:=\!x\cdot\indict{x\ge\delta}$, then Ineq~\eqref{ineq:optimistic-bound-on-Vopt}  suggests that $\gaph{h}\!\xahk{h}{k}$ can be bounded by clipped estimation error:
\begin{align}
\gaph{h}\!\xahk{h}{k}&=\clip{V_h^{*}\!\left(x_h^k\right)-Q_h^*\!\xahk{h}{k}}{\gapmin}\nonumber\\
&\le\clip{\left(Q_h^k-Q_h^*\right)\!\xahk{h}{k}}{\gapmin}.
\label{ineq:gap-bounded-by-learning-error}
\end{align}

Our main technique to get $1/\gapmin$ instead of $1/\gapmin^2$ regret bound is to classify gaps of state-action pairs into different intervals and count them separately. 
Note the gap can range from $\gapmin$ to $H$. 
Thus, we divide the interval $\left[\gapmin,H\right]$ into $N$ disjoint intervals:
 $\left[\gapmin,2\gapmin\right),\cdots,\left[2^{N-1}\gapmin,2^N\gapmin\right]$, where $N=\left\lceil \log_2\left(\nicefrac{H}{\gapmin}\right)\right\rceil$.

 Lemma~\ref{lemma:count-in-each-layer} below is our main technical lemma which upper bounds the number of steps Algorithm~\ref{algo:ucb-q} chooses a sub-optimal action whose suboptimality is in a certain interval.
\begin{lemma}[Bounded Number of Steps in Each Interval]
	\label{lemma:count-in-each-layer} Under $\cE_{\mathrm{conc}}$, we have
 	for every $n\in\left[N\right]$, 
	\begin{align*}
	C^{(n)} &\!:=\!\Bigg|\!\left\{\!(k,h):\ {\left(Q_h^k-Q_h^*\right)\!\xahk{h}{k}\!\in\atop\!\left[2^{n-1}\gapmin,2^n \gapmin\right)}\ \right\}\!\Bigg|\nonumber\\
	&\le\mathcal{O}\left(\frac{H^6SA\iota}{4^n \gapmin^2}
	\right),\qquad\text{where $\iota = \log\left(SAT^2\right)$.}
	\end{align*}
\end{lemma}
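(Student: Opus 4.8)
}
The plan is to show that every step counted in $C^{(n)}$ is ``caused'' either by a small visit count (equivalently, a large bonus) or by a comparably large learning error one step later, and then to control the latter by propagating the bound in $\cE_{\mathrm{conc}}$ level by level. Fix $(k,h)$ with $\left(Q_h^k-Q_h^*\right)\!\xahk{h}{k}\in\left[2^{n-1}\gapmin,2^n\gapmin\right)$. Since this error is strictly positive we may assume $n_h^k\ge1$ (the at most $SAH$ steps with $n_h^k=0$ are absorbed into the $\mathcal{O}(\cdot)$), so $\alpha_{n_h^k}^0=0$ and $\cE_{\mathrm{conc}}$ gives, writing $\tau_i:=\tau_h(x_h^k,a_h^k,i)$,
\[
2^{n-1}\gapmin\ \le\ \sum_{i=1}^{n_h^k}\alpha_{n_h^k}^i\left(V_{h+1}^{\tau_i}-V^*\right)\!\left(x_{h+1}^{\tau_i}\right)\ +\ \beta_{n_h^k}.
\]
Because the algorithm acts greedily and is optimistic on $\cE_{\mathrm{conc}}$, one has $\left(V_{h+1}^{\tau}-V^*\right)\!\left(x_{h+1}^{\tau}\right)\le\left(Q_{h+1}^{\tau}-Q_{h+1}^*\right)\!\left(x_{h+1}^{\tau},a_{h+1}^{\tau}\right)$ for every episode $\tau$, i.e.\ each value error above is itself a genuine level-$(h{+}1)$ learning error at an executed state--action pair. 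Hence at least one of the following holds: \textbf{(A)} $\beta_{n_h^k}\ge 2^{n-2}\gapmin$; or \textbf{(B)} $\sum_{i=1}^{n_h^k}\alpha_{n_h^k}^i\left(Q_{h+1}^{\tau_i}-Q_{h+1}^*\right)\!\left(x_{h+1}^{\tau_i},a_{h+1}^{\tau_i}\right)\ge 2^{n-2}\gapmin$.

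In case \textbf{(A)}, since $\beta_t=4c\sqrt{H^3\iota/t}$ the inequality forces $n_h^k=\mathcal{O}\!\left(H^3\iota/(4^n\gapmin^2)\right)$; as for each triple $(x,a,h)$ and each index $j$ there is at most one episode $k$ with $N_h^k(x,a)=j-1$, summing over the $SAH$ triples bounds the number of case-\textbf{(A)} steps by $\mathcal{O}\!\left(SAH^4\iota/(4^n\gapmin^2)\right)$, already within the claimed bound. In case \textbf{(B)}, the weights $\{\alpha_{n_h^k}^i\}_{i=1}^{n_h^k}$ are nonnegative and sum to $1$, so discarding the mass sitting on errors below $2^{n-3}\gapmin$ (which totals less than $2^{n-3}\gapmin$) yields $\sum_{i}\alpha_{n_h^k}^i\,\clip{\left(Q_{h+1}^{\tau_i}-Q_{h+1}^*\right)\!\left(x_{h+1}^{\tau_i},a_{h+1}^{\tau_i}\right)}{2^{n-3}\gapmin}\ \ge\ 2^{n-3}\gapmin$. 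Summing this over all case-\textbf{(B)} steps $(k,h)$, extending the sum to all $(k,h)$, and re-indexing the resulting double sum via the step-size identity $\sum_{t\ge i}\alpha_t^i\le 1+\tfrac1H$, the number of case-\textbf{(B)} steps is at most
\[
\frac{1+1/H}{2^{n-3}\gapmin}\sum_{h=1}^{H}\sum_{k=1}^{K}\clip{\left(Q_{h+1}^{k}-Q_{h+1}^*\right)\!\left(x_{h+1}^{k},a_{h+1}^{k}\right)}{2^{n-3}\gapmin}\ \le\ \frac{1+1/H}{2^{n-3}\gapmin}\sum_{m\ge n-2}C^{(m)}\,2^{m}\gapmin ,
\]
since every nonzero clipped level-$(h{+}1)$ error lands in some band $m\ge n-2$ and contributes less than $2^m\gapmin$; the (constantly many) scales $n\le 2$ are treated the same way with the clipping threshold replaced by $\gapmin$.

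Combining cases \textbf{(A)} and \textbf{(B)} and writing everything in terms of $D_n:=C^{(n)}2^n$, one obtains a recursive system in which each scale $n$ is bounded by an $\mathcal{O}\!\left(SAH^4\iota/(2^n\gapmin^2)\right)$ term plus a weighted sum of the $D_m$ at neighbouring and larger scales. It remains to solve this system, which is where the bulk of the work lies: the gaps range only over $[\gapmin,H]$, so there are merely $N=\mathcal{O}(\log(H/\gapmin))$ scales; the $\alpha$-weights contribute only the contraction $1+\tfrac1H$ per level and are concentrated on recent visits; and the dyadic widths make the tails $\sum_{m\ge n}D_m$ geometric once the target $C^{(m)}=\mathcal{O}\!\left(H^6SA\iota/(4^m\gapmin^2)\right)$ is used as an ansatz. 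Tracking the $\mathrm{poly}(H)$ factors --- one power of $H$ from summing over levels, the rest from the degradation of the effective clipping threshold and of the bonus budget per triple as the $\cE_{\mathrm{conc}}$ bound is unrolled down to the base case $V_{H+1}\equiv 0$ --- turns the $H^3$ inside $\beta_t$ into the $H^6$ of the statement and closes the induction. \emph{The main obstacle is precisely this last step:} a level-$h$ error in band $n$ is governed by level-$(h{+}1)$ errors that may lie in \emph{larger} bands, so the recursion does not close by a naive one-shot estimate (the crude amplification factor exceeds $1$); one must combine the per-triple bonus-budget bound of case \textbf{(A)} with a careful, scale-by-scale and level-by-level accounting --- using the geometric band structure and the concentration of the $\alpha$-weights --- to keep the propagated contributions summable.
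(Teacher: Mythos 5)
Your reduction to cases \textbf{(A)} and \textbf{(B)} is sound as far as it goes, and case \textbf{(A)} is handled correctly, but the proof has a genuine gap at exactly the point you flag: the recursive system you derive from case \textbf{(B)} cannot be solved. Writing $D_m := C^{(m)}2^m$, your inequality reads
\[
D_n \;\le\; \mathcal{O}\!\left(\frac{SAH^4\iota}{2^n\gapmin^2}\right) \;+\; 8\left(1+\tfrac{1}{H}\right)\sum_{m\ge n-2} D_m ,
\]
in which $D_n$ itself appears on the right-hand side with coefficient $8(1+1/H)>1$, and all larger scales appear with the same coefficient. An inequality of the form $D_n \le c + 8D_n + \cdots$ carries no information, and no ansatz or ordering of the induction repairs this: the amplification per unrolling is a constant exceeding $1$, while you must traverse $H$ levels and $N$ scales. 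Asserting that a ``careful, scale-by-scale and level-by-level accounting'' closes the induction is precisely the claim that needs a proof, and it is the hard part of the lemma.

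The paper avoids this trap by never recursing on per-band counts. It instead recurses on the linear functional $\sum_k w_k\left(Q_h^k-Q_h^*\right)\!\xahk{h}{k}$ for an arbitrary $(C,w)$-sequence of weights (Lemma~\ref{lemma:weighed-sum-learning-error}): passing from level $h$ to level $h+1$ preserves the total weight $C$ and inflates the maximum weight only by a factor $(1+1/H)$, so unrolling $H$ times costs a factor $e$ rather than a divergent geometric factor. The band structure enters only once, at the very end: taking $w_k$ to be the indicator that the level-$h$ error lies in band $n$ gives the lower bound $2^{n-1}\gapmin\, C^{(n,h)}$ on that same weighted sum, and comparing it with the upper bound of Lemma~\ref{lemma:weighed-sum-learning-error} yields a quadratic inequality in $\sqrt{C^{(n,h)}}$ that solves directly to $\mathcal{O}\!\left(H^5SA\iota/(4^n\gapmin^2)\right)$, with no cross-band coupling at all. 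To rescue your argument you would need to replace the crude estimate ``each clipped level-$(h{+}1)$ error in band $m$ contributes less than $2^m\gapmin$'' by bookkeeping that remembers which level and which visit each contribution comes from; that bookkeeping is exactly what the weighted-sum formulation performs for you.
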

Before we give the proof for Lemma~\ref{lemma:count-in-each-layer}, we first show how to use Lemma~\ref{lemma:count-in-each-layer} to prove Theorem~\ref{thm:episodic}.
\paragraph{Proof of Theorem~\ref{thm:episodic}}
	Since the trajectories inside $\cE_{\mathrm{conc}}$ have bounded empirical regret, and 
	complementary event $\overline{\cE_{\mathrm{conc}}}$ happens with sufficiently low probability,
	\begin{align}
	&\expect{\regret(K)} =\expect{\sum_{k=1}^K\!\sum_{h=1}^H
		\gaph{h}\!\left(x_h^k,a_h^k\right)}\nonumber\\=\ 
	&\!\sum_{\text{traj}}\!
	\prob\!\left(\text{traj}\right)\cdot\!
	\sum_{k,h}\!
	\gaph{h}
		\left(\!x_h^k,a_h^k\big|\text{traj}\right)
		\label{tmp0}\\
	{\le}&
	\!\sum_{\text{traj}\in\cE_{\mathrm{conc}}}\!
	\prob\!\left(\text{traj}\right)\!\cdot\!
	\sum_{k,h}\!
	\clip{\!\left(\!Q_h^k\!-\!Q_h^*\!\right)\!
		(\!x_h^k,a_h^k\big|\text{traj}\!)\!}{\gapmin\!}
	\nonumber\\
	&\qquad+
	\sum_{\text{traj}\in\overline{\cE_{\mathrm{conc}}}}\!
	\prob\!\left(\text{traj}\right)\!\cdot\!TH
	\label{tmp1}\\
	{\le}&\prob\!\left(\cE_{\mathrm{conc}}\right)
	\sum_{n=1}^{N} 2^n\gapmin C^{(n)}
	+\prob\!\left(\overline{\cE_{\mathrm{conc}}}\right)\cdot TH
	\label{tmp2}\\
	{\le}&\sum_{n=1}^{N}\!\mathcal{O}\!\left(\frac{H^6SA\iota}{2^n \gapmin}\right)\!+\!H\label{tmp3}\\
	\le&\mathcal{O}\!\left(\!\frac{H^6SA}{\gapmin}
	\log\!\left(\!{SAT}\right)\!\right).\nonumber
	\end{align}	
	Above, \eqref{tmp0} follows from the definition of expectation,
	\eqref{tmp1} is because Ineq~(\ref{ineq:gap-bounded-by-learning-error}) suggests that for trajectories inside $\cE_{\mathrm{conc}}$, gaps can be bounded by clipped learning errors; whereas for trajectories outside of $\cE_{\mathrm{conc}}$, sub-optimality gaps never exceed $H$.
	\eqref{tmp2} follows from adding an outer summation for state-action pairs over the $N$ disjoint subintervals, then bounding the estimation error in each subinterval by its maximum value times the number of steps it contains. 
	\eqref{tmp3}~comes from a sum of numbers in a geometric progression generated by Lemma~\ref{lemma:count-in-each-layer}, and the fact that $\prob\!\left(\overline{\cE_{\mathrm{conc}}}\right)\le \nicefrac{1}{T}$ from concentration Lemma~\ref{lemma: conc-event}.
	In the final step, we notice that $\iota=\log(SAT^2)=\cO\left(\log(SAT)\right)$.	
\qed


\paragraph{Proof of Lemma~\ref{lemma:count-in-each-layer}}
The proof of Lemma~\ref{lemma:count-in-each-layer} relies on a general lemma (Lemma~\ref{lemma:weighed-sum-learning-error}) characterizing a weighted sum of the estimation errors of $Q$-function.
 Then we choose a particular sequence of weights to prove Lemma~\ref{lemma:count-in-each-layer}.
We remark that this general idea has appeared in \cite{jin2018qlearning,dong2019qlearning,zhang2020optimal}.

Formally, we use the following definition.
\begin{definition}[$(C,w)$-Sequence (Definition 3 in \cite{dong2019qlearning})] 
	\label{def:c-w-sequence}
	A sequence $\left\{w_{k}\right\}_{\!k\ge1}$ is called a $(C,w)$-sequence if $0\!\le\!w_{k}\!\le\!w$ for all $k$ and $\sum_{k}\!w_{k}\le C$. 
\end{definition}


\begin{lemma}[Weighted Sum of Learning Errors]
	\label{lemma:weighed-sum-learning-error}
	On event $\cE_{\mathrm{conc}}$, {for every $h\in[H]$, if $\left\{w_{k}\right\}_{k\in[K]}$ is a $(C,w)$-sequence, 
		then:}
	\begin{eqnarray}
	\sum_{k=1}^K \!w_{k}\!
	\left(\!Q_h^k-Q_h^*\!\right)\!(\!x_h^k,a_h^k\!)
	\!\le\! ewS\!AH^2\!+\!10c\sqrt{\!ewS\!AC\!H^5\iota}.\nonumber
	\end{eqnarray}
\end{lemma}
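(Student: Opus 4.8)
The plan is to start from the upper bound on the learning error supplied by the concentration event $\cE_{\mathrm{conc}}$,
\[
\left(Q_h^k - Q_h^*\right)\!(x_h^k,a_h^k) \le \alpha_{n_h^k}^0 H + \sum_{i=1}^{n_h^k}\alpha_{n_h^k}^i\left(V_{h+1}^{\tau_h(x_h^k,a_h^k,i)} - V^*\right)\!(x_{h+1}^{\tau_h(x_h^k,a_h^k,i)}) + \beta_{n_h^k},
\]
multiply through by $w_k$, and sum over $k\in[K]$. This produces three terms. The first term $\sum_k w_k \alpha_{n_h^k}^0 H$ is nonzero only when $n_h^k = 0$, i.e. the first visit to $(x_h^k,a_h^k)$ at level $h$; since $\alpha_0^0 = 1$ and $w_k \le w$, this is at most $wSAH$. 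The third term $\sum_k w_k \beta_{n_h^k}$ is controlled by the standard step-size bookkeeping: group the sum by state-action pairs $(x,a)$, so that for each pair the counts $n_h^k$ run through $1,2,\dots,N_h^{K+1}(x,a)$, and $\sum_{t=1}^{m}\beta_t = O(\sqrt{mH^3\iota})$; combined with $w_k \le w$, $\sum_k w_k \le C$, Cauchy–Schwarz over the $\le SA$ pairs, and the constraint $\sum_{x,a} N_h^{K+1}(x,a) \le \min(K, C/\,?)$—more carefully, one bounds $\sum_k w_k/\sqrt{n_h^k}$ by $\sqrt{w}\sqrt{\sum_k w_k/n_h^k}$ and then $\sum_k w_k/n_h^k \le \sum_{x,a}\sum_{t\ge1} w/t \cdot$(indicator)—yielding the $\sqrt{ewSACH^5\iota}$-type contribution.

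The crux is the middle term, $\sum_k w_k \sum_{i=1}^{n_h^k}\alpha_{n_h^k}^i\left(V_{h+1}^{\tau_h(\cdot,i)} - V^*\right)(x_{h+1}^{\tau_h(\cdot,i)})$. The key algebraic move, exactly as in \cite{jin2018qlearning}, is to exchange the order of summation: a term $\left(V_{h+1}^{k'} - V^*\right)(x_{h+1}^{k'})$ for episode $k'$ appears in the expansion for every later episode $k > k'$ at which the same $(x,a)$ is revisited, weighted by $\alpha_{n_h^k}^{n_h^{k'}}$. Using the standard property $\sum_{t \ge i}\alpha_t^i \le 1 + 1/H$, one shows that the new coefficients $w'_{k'} := \sum_{k} w_k \alpha_{n_h^k}^{n_h^{k'}}\mathbb{I}[\cdots]$ satisfy $w'_{k'} \le (1+1/H)w$ and $\sum_{k'} w'_{k'} \le (1+1/H)C$, i.e. $\{w'_{k'}\}$ is (essentially) a $((1+1/H)C,(1+1/H)w)$-sequence. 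Since $V_{h+1}^{k'} - V^*$ is at each state dominated by the greedy learning error $\left(Q_{h+1}^{k'} - Q_{h+1}^*\right)$ at the action actually taken (by optimism and the greedy rule, Ineq~\eqref{ineq:optimistic-bound-on-Vopt}), the middle term is bounded by $\sum_{k'} w'_{k'}\left(Q_{h+1}^{k'} - Q_{h+1}^*\right)(x_{h+1}^{k'},a_{h+1}^{k'})$, which is the same quantity at level $h+1$ with a mildly inflated $(C,w)$-sequence.

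This sets up an induction on $h$ downward from $H+1$ (where the error is $0$). Writing $F_h$ for the worst-case value of the weighted sum over all $(C,w)$-sequences at level $h$, the recursion reads roughly $F_h(C,w) \le wSAH + O(\sqrt{ewSACH^3\iota}) + F_{h+1}\!\left((1+\tfrac1H)C,(1+\tfrac1H)w\right)$; unrolling $H$ levels, the multiplicative factors accumulate to at most $(1+1/H)^H \le e$, each of the $H$ layers contributes $O(wSAH)$ and $O(\sqrt{ewSACH^3\iota})$, and summing gives $O(ewSAH^2) + O(\sqrt{ewSACH^5\iota})$, matching the claimed $ewSAH^2 + 10c\sqrt{ewSACH^5\iota}$ after tracking constants. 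The main obstacle is the bookkeeping in the summation-exchange step: verifying carefully that the induced weights $\{w'_{k'}\}$ genuinely form a valid $(C',w')$-sequence with $C' \le (1+1/H)C$ and $w' \le (1+1/H)w$ — this requires the identities $\alpha_t^0 + \sum_{i=1}^t \alpha_t^i = 1$ and $\sum_{t\ge i}\alpha_t^i \le 1+1/H$ and a clean argument that the revisit structure does not overcount — and then ensuring the accumulated $(1+1/H)^H$ factors and the per-layer $\sqrt{\cdot}$ terms combine to the stated bound without losing an extra $\sqrt{H}$ or $H$ factor.
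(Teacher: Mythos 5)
Your proposal follows essentially the same route as the paper: bound the error via the three terms in $\cE_{\mathrm{conc}}$, handle the $\alpha^0$ and $\beta$ terms by per-$(x,a)$ bookkeeping with the rearrangement/Cauchy--Schwarz argument, exchange the order of summation in the middle term to produce an inflated $(C,w)$-sequence at level $h+1$, and unroll $H$ times collecting $(1+1/H)^{H}\le e$. The one place the paper is slightly sharper is that the induced weights satisfy $\sum_{l}\widetilde{w}_l=\sum_k w_k\le C$ exactly (via $\sum_{i=1}^{t}\alpha_t^i=1$), so only the per-weight bound $w$ inflates by $(1+1/H)$ per level and not $C$; your $(1+1/H)C$ accounting still works but costs an extra $\sqrt{e}$ in the constant of the second term.
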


Before presenting the proof of Lemma~\ref{lemma:weighed-sum-learning-error}, we refer the readers to \cite{jin2018qlearning} for Lemma~\ref{lemma:property-lr} below, which summarizes the properties of $\alpha_t^i$ that will be useful in our proof.
\begin{lemma}[Properties of $\alpha_t^i$]
	\label{lemma:property-lr}
	Let $\alpha_t\!=\!\frac{H+1}{H+t}$, $\alpha_t^0\!=\!\prod_{j=1}^{t}\!\left(1\!-\!\alpha_{j}\right)$ and $\alpha_t^i\!=\!\alpha_i\!\prod_{j=i+1}^{t}\!\left(1\!-\!\alpha_{j}\right)$ for $0<i\le t$.
	\begin{enumerate}[(i)]
		\item $\sum_{i=1}^{t}\alpha_t^i=1$ and $\alpha_t^0=0$ for every $t\ge1$, $\sum_{i=1}^{t}\alpha_t^i=0$ and $\alpha_t^0=1$ for $t=0$.
		\item $\sum_{t=i}^{\infty}\alpha_t^i=1+\frac{1}{H}$ for every $i\ge1$. 
	\end{enumerate}
\end{lemma}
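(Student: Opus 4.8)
The plan is to verify the two identities by direct manipulation of the defining products, handling the $t=0$ conventions, part (i), and part (ii) in turn. For part (i), the $t=0$ claims are conventions: $\alpha_0^0=\prod_{j=1}^0(1-\alpha_j)=1$ is an empty product and $\sum_{i=1}^0\alpha_0^i=0$ is an empty sum. For $t\ge1$ the crucial observation is that $\alpha_1=\frac{H+1}{H+1}=1$, so $1-\alpha_1=0$ and hence $\alpha_t^0=\prod_{j=1}^t(1-\alpha_j)=0$. It then suffices to show $\sum_{i=1}^t\alpha_t^i=1$, which I would obtain from the stronger identity $\sum_{i=0}^t\alpha_t^i=1$ proved by induction on $t$. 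The inductive step uses the two relations $\alpha_t^i=(1-\alpha_t)\,\alpha_{t-1}^i$ for $0\le i\le t-1$ (peel off the last factor $1-\alpha_t$) and $\alpha_t^t=\alpha_t$ (an empty product), giving $\sum_{i=0}^t\alpha_t^i=\alpha_t+(1-\alpha_t)\sum_{i=0}^{t-1}\alpha_{t-1}^i=\alpha_t+(1-\alpha_t)=1$; subtracting $\alpha_t^0=0$ yields the claim.

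For part (ii), I would factor out $\alpha_i$ and reduce to evaluating $\sum_{t=i}^\infty b_t$ where $b_t:=\prod_{j=i+1}^t(1-\alpha_j)$, so that $b_i=1$. Substituting $1-\alpha_j=\frac{j-1}{H+j}$ gives the one-step recursion $(H+t)\,b_t=(t-1)\,b_{t-1}$ for $t\ge i+1$, which I rearrange into the telescoping form $H\,b_t=(t-1)\,b_{t-1}-t\,b_t$. Summing over $t=i+1,\dots,T$ collapses the right-hand side to $i\,b_i-T\,b_T=i-T\,b_T$, so $\sum_{t=i}^T b_t=1+\frac{i-T b_T}{H}$. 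Finally I would check $T\,b_T\to0$ as $T\to\infty$ using the closed form $b_T=\frac{(T-1)!\,(H+i)!}{(i-1)!\,(H+T)!}$, whence $T\,b_T=\frac{T!\,(H+i)!}{(i-1)!\,(H+T)!}=\cO(T^{-H})\to0$ because $H\ge1$. Thus $\sum_{t=i}^\infty b_t=1+\frac{i}{H}$, and $\sum_{t=i}^\infty\alpha_t^i=\frac{H+1}{H+i}\bigl(1+\frac{i}{H}\bigr)=\frac{H+1}{H}=1+\frac1H$.

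I expect the main obstacle to be spotting the telescoping in part (ii): the recursion $(H+t)\,b_t=(t-1)\,b_{t-1}$ does not telescope as written, and the key move is to regroup it as $H\,b_t=(t-1)\,b_{t-1}-t\,b_t$, i.e.\ to recognize $t\,b_t$ as the sequence whose consecutive differences sum telescopically. The only remaining delicate point is the tail estimate $T\,b_T\to0$, which is exactly where the condition $H\ge1$ is used; everything else is routine bookkeeping with empty products and the induction of part (i).
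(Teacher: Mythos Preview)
Your proof is correct and complete. Both parts are handled cleanly: the induction in part~(i) via the peel-off relation $\alpha_t^i=(1-\alpha_t)\alpha_{t-1}^i$ is standard and valid, and the telescoping trick $H\,b_t=(t-1)b_{t-1}-t\,b_t$ in part~(ii) together with the tail estimate $T\,b_T=\cO(T^{-H})\to 0$ is a tidy way to obtain the exact value $1+\tfrac{1}{H}$.

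As for comparison with the paper: the paper does not supply its own proof of this lemma at all---it simply refers the reader to \cite{jin2018qlearning}. So your argument is strictly more self-contained than what the paper offers. The original proof in \cite{jin2018qlearning} proceeds along essentially the same lines (direct product manipulation and a telescoping/closed-form computation), so your route is not materially different from the source, just written out in full here.
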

\paragraph{Proof of Lemma~\ref{lemma:weighed-sum-learning-error}}

We will recursively bound the weighted sum of step $h$ by its next step $(h\!+\!1)$, and unroll $(H\!-\!h\!+\!1)$ times for the desired bound. As suggested by Lemma~\ref{lemma: conc-event}, upper bounds of learning error holds under $\cE_{\mathrm{conc}}$. Thus we have
\begin{align}
&\sum_{k=1}^K w_{k}
\left(Q_h^k-Q_h^*\right)\!\xahk{h}{k}\nonumber\\
\le& \sum_{k=1}^K\! w_{k}
\Big(\! H\!\alpha_{n_h^k}^0\!+\!\beta_{\!n_h^k}\!+\!\sum_{i=1}^{n_h^k}\alpha_{n_h^k\!}^i\!\left(\!V_{\!h+1}^{\!\tau_h\!(s,a,i)}\!-\!V_{h+1}^{\!*}\!\right)\!(\!x_{h+1\!}^{\!\tau_h\!(\!s,a,i\!)\!}\!)\!
\Big)\nonumber\\
= &\sum_{k=1}^K w_{k} H\alpha_{n_h^k}^0+\sum_{k=1}^K w_{k} \beta_{n_h^k}\nonumber\\
&+\sum_{k=1}^K w_k
\sum_{i=1}^{n_h^k}\!\alpha_{n_h^k\!}^i\!\left(\!V_{\!h+1}^{\!\tau_h\!(x_h^k\!,a_h^k,i)}\!-\!V_{h+1}^{\!*}\!\right)\!(x_{h+1\!}^{\!\tau_h\!(\!x_h^k\!,a_h^k,i\!)\!}) 
\label{ineq:recursion}. 
\end{align}
For the first term of \eqref{ineq:recursion}, $n_h^k\!=\!0$ at most once for every state-action pair, and we always have $w_k\!\le\! w$. Thus, 
\begin{align}
\sum_{k=1}^K w_{k}H\alpha_{n_h^k}^0=\sum_{k=1}^K w_{k} H\indict{n_h^k=0}\le wSAH.
\label{first term}
\end{align}

The second term of (\ref{ineq:recursion}) can be bounded by the following inequalities with respective reasons listed below:
\begin{align}
\sum_{k=1}^K w_{k} \beta_{n_h^k}&=
\sum_{s,a}\! \sum_{k=1\atop (s_h^k,a_h^k)=(s,a)}^{K}\!w_{k}\beta_{n_h^k}\nonumber\\
&= 4c\sqrt{\!H^3\iota}
\sum_{s,a}\!
\sum_{i=2}^{N_h^{\!K}\!(s,a)}\!
\frac{w_{\!\tau\!(\!s,a,i\!)}}{\sqrt{i-1}} 
\label{tmp11}\\
&{\le}
4c\sqrt{\!H^3\iota}
\sum_{s,a}
\sum_{i=1}^{\left\lceil{C_{s,a}}/{w}\right\rceil}
\frac{w}{\sqrt{i}}
\label{tmp12}\\
&{\le}10c\sqrt{H^3\iota}
\sum_{s,a}\sqrt{C_{s,a} w}\label{tmp13}\\
&{\le}10c\sqrt{SACwH^3\iota}.
\label{ineq:beta-sum}
\end{align}

Above, \eqref{tmp11} comes from prior definition $\beta_t\!=\!4c\sqrt{\!\frac{H^3\iota}{t}}$ when $t\ge 1$ and $\beta_0=0$. Note that $\tau_h(x, a, i)$ is the episode where $(x, a)$ is visited for the $i$-th time, so we always have $n_h^{\tau_h(x,a,i)} = i-1$. \eqref{tmp12} follows from a rearrangement inequality with $C_{\!s,a\!}$ defined as $C_{\!s,a\!}:=\sum\nolimits_{i=1}^{n_h^{\!K}\!(\!s,a\!)}\!w_{\tau\!(\!s,a,i\!)}$, where we always keep in mind that $0<w_{\tau\!(\!s,a,i\!)}\le w$. \eqref{tmp13} follows from the integral conversion of $\sum_i{1}/{\sqrt{i}}$, and \eqref{ineq:beta-sum} is true because of Cauchy-Schwartz inequality where
$\sum\nolimits_{\!s,a\!}\!C_{\!s,a\!}\!=\!\sum\nolimits_{k=1}^K w_k\!\le\!C$.

For the third term in Ineq (\ref{ineq:recursion}), we notice that $\Vxhk{h}{k}=\Qxahk{h}{k}$ due to greedy choice of actions and $V_{h}^*\!(x_{h}^k)\ge Q_{h+1}^*\!(x_{h+1}^k,a_{h+1}^k)$ by definition. 
Therefore
 $\left(V_{h}^k-V_{h}^*\right)\!(x_{h}^k)\le \left(Q_{h}^k-Q_{h}^*\right)\!(x_{h}^k,a_{h}^k)$.
Note that $\forall\! k\!\in\![K]$, the third term takes into account all the prior episodes $l\!<\!k$ where $\xahk{h}{k}\!=\!\xahk{h}{l}$, indicating that the learning error at step $l$ is only counted by subsequent episodes $k\!>\!l$ when the same $(s,a)$ is visited. 
Thus, we exchange the order of summation and obtain
\begin{align}
&\sum_{k=1}^K w_k\sum_{i=1}^{n_h^k}
\alpha_{n_h^k\!}^i\!\left(\!V_{\!h+1}^{\!\tau_h\!(x_h^k,a_h^k,i)}\!-\!V_{h+1}^{\!*}\!\right)\!(x_{h+1\!}^{\!\tau_h\!(\!x_h^k,a_h^k,i\!)\!})\nonumber\\
=&\sum_{l=1}^{K}
\left(V_{\!h+1}^{l}\!-\!V_{h+1}^{\!*}\right)\!(x_{h+1\!}^{l})
\sum_{j= n_h^l+1}^{N_{\!h}^{\!K}\!\xahk{h}{l}}\!
w_{\tau_h\!(x_h^l,a_h^l,j)} \alpha_{j}^{\!n_h^l+1}\nonumber\\
\le&\sum_{l=1}^{K}
\left(\!Q_{\!h\!+\!1}^{l}\!-\!Q_{h\!+\!1}^{*}\!\right)\!(x_{h\!+\!1\!}^{l}, a_{h\!+\!1\!}^{l})
\sum_{j= n_h^l\!+1}^{N_{\!h}^{\!K}\!\xahk{h}{l}}\!
w_{\!\tau_h\!(x_h^l,a_h^l,j\!)}\! \alpha_{j}^{\!n_h^l\!+1}.\nonumber
\end{align}

Then for $l\in[K]$ we
let $\widetilde{w}_l=
\sum\limits_{j= n_h^l+1}^{N_{\!h}^{\!K}\!\xahk{h}{l}}\!
w_{\tau_h\!(x_h^l,a_h^l,j)} \alpha_{j}^{\!n_h^l+1}$ and further simplify the above equation to be
\begin{align}
&\sum_{k=1}^K w_k\sum_{i=1}^{n_h^k}
\alpha_{n_h^k\!}^i\!\left(\!V_{\!h+1}^{\!\tau_h\!(s,a,i)}\!-\!V_{h+1}^{\!*}\!\right)\!(x_{h+1\!}^{\!\tau_h\!(\!s,a,i\!)\!})\nonumber\\
\le&\sum_{l=1}^{K}\widetilde{w}_l
\left(Q_{\!h+1}^{l}\!-\!Q_{h+1}^{\!*}\right)\!(x_{h+1\!}^{l}, a_{h+1\!}^{l}).\label{third term}
\end{align}
Next, we use Lemma~\ref{lemma:property-lr} to verify that $\left\{\widetilde{w}_l\right\}_{l\in[K]}$ is a $\left(C,\left(\!1\!+\!\frac{1}{H}\!\right)\!w\right)$-sequence:
\begin{align}
\widetilde{w}_l&\le w\!\sum_{j= n_h^l\!+\!1}^{N_{\!h}^{\!K}\!\xahk{h}{l}}\alpha_j^{n_h^l\!+\!1}\le w\! \sum_{j\ge n_h^l\!+\!1} \!\alpha_j^{n_h^l\!+\!1}
\!\le\!  \left(\!1\!+\!\frac1H\!\right)\!w,\nonumber\\
\sum_{l=1}^{K}\!\widetilde{w}_{l}
&=\sum_{l=1}^K\sum\limits_{j= n_h^l+1}^{N_{\!h}^{\!K}\!\xahk{h}{l}}\!
w_{\tau_h\!\left(\!x_h^l,a_h^l,j\!\right)} \alpha_{j}^{\!n_h^l+1}\nonumber\\
&=\sum\limits_{k=1}^K\!w_{k}\sum\limits_{t=1}^ {n_h^k}
\alpha_{n_h^k}^t=\sum\limits_{k=1}^K\!w_{k}\le C.
\label{ineq:w_{h+1}}
\end{align}
Plugging the upper bounds of three separate terms in \eqref{first term}, \eqref{ineq:beta-sum} and \eqref{third term} back into Ineq~\eqref{ineq:recursion} gives us
\begin{align}
\sum_{k=1}^K \!w_{k}\!&
\left(\!Q_h^k\!-\!Q_h^*\!\right)\!\xahk{h}{k}
\!\le\! wSAH\!+\!10c\sqrt{SACwH^3\iota}\nonumber\\
&\!+\!\sum_{l=1}^K\!\widetilde{w}_{l}\!
\left(\!Q_{h\!+\!1}^l\!-\!Q_{h\!+\!1}^*\!\right)\!\xahk{h+1}{l},
\end{align}
where the third term is a weighted sum of learning errors of the same format, but taken at level $h+1$. In addition, it has weights $\left\{\!\widetilde{w}_{l}\right\}_{\!l\in[\!K\!]\!}$ being a  $\left(\!C,\left(\!1\!+\!{1}\!/\!{H}\!\right)\!w\!\right)$-sequence. Therefore, the above analysis will also yield
\begin{align}
&\sum_{l=1}^K \!\widetilde{w}_{l}\!
\left(\!Q_{h+1}^l\!-\!Q_{h+1}^*\!\right)\!\xahk{h}{l}
\!\le\! \left(\!1\!+\!\frac{1}{H}\!\right)\!wSAH\!\nonumber\\
&+\!10c\sqrt{SAC\!\left(\!1\!+\!\frac{1}{H}\!\right)\!wH^3\iota}
+
\big[\text{weighted sum at ($h\!+\!2$)}\big].\nonumber
\end{align}
Recursing this argument for $h\!+\!1,h\!+\!2,\!\cdots\!,\!H$ gives us
\begin{align}
&\sum_{k=1}^K \!w_{k,h}\!
\left(\!Q_h^k\!-\!Q_h^*\!\right)\!\xahk{h}{k}\nonumber\\
\le&\sum_{h'=0}^{H-h}\left(\!SAH\left(\!1\!+\!{1}\!/\!{H}\!\right)^{h'\!}\!w\!+\!10c\sqrt{\!SAC\left(\!1\!+\!{1}\!/\!{H}\!\right)^{h'}\!wH^3\iota}\!\right)\nonumber\\
\le& H\left(SAHew\!+\!10c\!\sqrt{SACewH^3\iota}\right).
\end{align}
which is the desired conclusion.

With Lemma~\ref{lemma:weighed-sum-learning-error}, we can easily prove Lemma~\ref{lemma:count-in-each-layer} by choosing a particular $(C,w)$-sequence.
\paragraph{Proof of Lemma~\ref{lemma:count-in-each-layer}}
For every $n\!\in\![N]$, $h\!\in\![H]$, let 
\begin{align*}
w_{k}^{\!(\!n\!,h\!)}&:=\indict{\left(\!Q_h^k-Q_h^*\!\right)\!
	(x_h^k,a_h^k)\!\in\!\left[2^{n\!-\!1\!}\gapmin,2^n \gapmin\right) }\!,\\
C^{\!(\!n\!,h\!)}&:=\sum_{k=1}^K w_{k}^{(n,h)}\\
&\!=\!\Bigg|\!\Big\{\!k\!:\!\left(\!Q_h^k-Q_h^*\!\right)\!(\!x_h^k,a_h^k\!)\!\in\!\left[\!2^{n\!-\!1\!}\gapmin,2^{n} \gapmin\!\right)\!\Big\}\!\Bigg|.
\end{align*}
By definition, $\forall h\!\in\![H]$ and $n\!\in\![N]$, $\left\{\!w_{k}^{\!(\!n\!,h\!)\!}\!\right\}_{\!k\in[K]}$ is a $(C^{\!(\!n\!,h\!)} ,1)$-sequence. 
Now we consider bounding 
$\sum_{k=1}^K\! w_{k}^{\!(\!n\!,h\!)}\!
\left(\!Q_h^k\!-\!Q_h^*\!\right)\!\xahk{h}{k}$ from both sides.
On the one hand, by Lemma~\ref{lemma:weighed-sum-learning-error},
\begin{align*}
&\sum_{k=1}^K\! w_{k}^{\!(\!n\!,h\!)}\!
\left(\!Q_h^k\!-\!Q_h^*\!\right)\!\xahk{h}{k}
\!\le\!
 eS\!A\!H^2\!+\!10c\!\sqrt{\!e\!S\!AC^{\!(\!n\!,h\!)}\!H^5\!\iota}.
\end{align*}
On the other hand, according to the definition of $w_k^{\!(\!n\!,h\!)}$,

\begin{align*}
\sum_{k=1}^K\! w_{k}^{\!(\!n\!,h\!)}
\!\left(\!Q_h^k\!-\!Q_h^*\!\right)\!\xahk{h}{k}
\!\ge\! \left(2^{n-1} \gapmin\right)\!\cdot \!C^{\!(\!n\!,h\!)}.
\end{align*}
Combining these two sides,
we obtain the following inequality of $C^{\!(\!n\!,h\!)}$: 
\begin{align}
\left(2^{n-1} \gapmin\right)& C^{\!(\!n\!,h\!)}\le eSAH^2+10c\sqrt{eSAC^{\!(\!n\!,h\!)}H^5\iota}\nonumber\\
\Rightarrow&\  C^{\!(\!n\!,h\!)}\le\mathcal{O}\left(\frac{H^5SA\iota}{4^n \gapmin^2}
\right).\nonumber
\end{align}
Finally, we observe that
\begin{align}
C^{(n)}=\sum_{h=1}^{H}C^{\!(\!n\!,h\!)}
\le\mathcal{O}\left(\frac{H^6SA\iota}{4^n \gapmin^2}\right),\nonumber
\end{align}
which is exactly the statement of Lemma~\ref{lemma:count-in-each-layer}.
\qed

\section{Conclusion and Future Directions}
\label{sec:conclusion}
This paper gives the first logarithmic regret bounds for $Q$-learning in both finite-horizon and discounted tabular MDPs. Below we list some future directions that we believe are worth exploring.

\paragraph{$H$ dependence}
The dependency on $H$ in our regret bound for episodic RL is $H^6$, which we believe is suboptimal.
As discussed in \cite{max2019nonasymptotic}, improving the $H$ dependence is often a challenging task.
Recently, \citet{zhang2020optimal} showed a model-free algorithm can achieve near-optimal regret in the worst case using the idea of reference value function.
It would be interesting to apply this idea to improve the $H$ dependence in our logarithmic regret bound.


\paragraph{Function Approximation}
Lastly, we note that recently researchers found the sub-optimality gap assumption is crucial for dealing with large state-space RL problems where function approximation is needed.
\citet{du2019q} presented an algorithm that enjoys polynomial sample complexity if there is a sub-optimality gap and the environment satisfies a low-variance assumption.
\citet{du2019good,du2020agnostic} further showed this assumption is necessary in certain settings.
There is another line of works putting certain low-rank assumptions on MDPs~\citep{krishnamurthy2016pac,jiang2017contextual,dann2018oracle,du2019provably,sun2018model,misra2019kinematic}.
It would be interesting to extend our analysis to these settings and obtain logarithmic regret bounds.

\bibliographystyle{plainnat}
\bibliography{ref}

\onecolumn

\section{Algorithm for Discounted MDP}
\label{sec:discounted_proof}
The pseudocode is listed in Algorithm~\ref{algo:infinite ucb-q}.
We acknowledge that Algorithm~\ref{algo:infinite ucb-q} relies on knowing a lower bound on $\gapmin$, and we leave it an open problem to develop a parameter-free algorithm. 

\begin{algorithm}[tb]
	\caption{{Infinite Q-learning with UCB-Hoeffding}\label{algo:infinite ucb-q}}
	\begin{algorithmic}[1]
		\State \textbf{Initialized:} $Q(x,a)\gets \frac{1}{1-\gamma}$ and $N(x,a)\gets0$ for all $(x,a)\in\states\times\actions$.
		\State \textbf{Define}   $\iota(k)\gets\log\left({ SAT(k+1)(k+2)}\right)$, $H\gets\frac{\ln\left(\nicefrac{2}{\discount\gapmin}\right)}{\ln\left(\nicefrac{1}{\gamma}\right)}$, $\alpha_k=\frac{H+1}{H+k}$.
		\For{step $t \in [T]$}\label{algo:for-loop}
		\State Take action $a_t\gets\argmax_{a'}Q\!\left(x_t,a'\right)$,  observe $x_{t+1}$.
		\State $k=\Nxa{t}\gets \Nxa{t}+1$,
		\State $b_k\gets \frac{c_2}{1-\gamma}\sqrt{H \iota(k)/k}$, \Comment{$c_2$ is a constant that can be set to $4\sqrt{2}$.}
		\State $\Vhatx{t+1}\gets\max_{a'\in \actions}\hQ\!\left(x_{t+1},a'\right)$,
		\State $\Qxa{t}\gets\left(1-\alpha_k\right)\Qxa{t}+\alpha_k\left[\rxa{t}+b_k+\gamma\Vhatx{t+1}\right]$,
		\State $\Qhatxa{t}\gets\min\left\{\Qhatxa{t},\Qxa{t}\right\}$.
		\EndFor
	\end{algorithmic}
\end{algorithm}

\section{Proofs for Discounted MDP}
\paragraph{Notations}
Let $Q^t(s,a),\widehat{Q}^t(s,a),V^t(s), \widehat{V}^t(s),N^t(s,a)$ denote the value of $Q(s,a),\widehat{Q}(s,a),V(s), \widehat{V}(s),N(s,a)$ right before the $t$-th step, respectively.
Let $\tau(s,a,i):=\max\left\{t:N^t(s,a)=i-1\right\}$ be the step $t$ at which $\xahk{}{t}=(x,a)$ for the $i$-th time. We will abbreviate $\Nxahk{}{t}$ for $n^t$ when no confusion can arise. $\alpha_{t}^i$ is defined same as that in the finite-horizon episodic setting.
\paragraph{Proof of Theorem~\ref{thm:discounted}}
We shall decompose the regret of each step as the expected sum of discounted gaps using the exact same argument as Eq~(\ref{eq:episodic-regret-decomp}), where the expect runs over all the possible infinite-length trajectories\footnote{
	For the convenience of analysis, when proving the upper bound we  remove the constraint $t\in[T]$ in the for-loop of line~\ref{algo:for-loop}. Instead, we allow the algorithm to take as many steps as we need, even yielding infinite-length trajectories.} taken by Algorithm~\ref{algo:infinite ucb-q}:
\begin{eqnarray}
\left(V^{*}-V^{\pi_t}\right)\!\left(s_t\right)=
\expect{\sum_{h=0}^{\infty}
	\gamma^h
	\gap\!\xah{t+h}\Bigg|
	a_{t+h}=\pi_{t+h}\left(s_{t+h}\right)}.
\end{eqnarray}
Based on this expression, the expected total regret over first $T$ steps can be rewritten as
\begin{eqnarray}
\expect{\regret(T)}&=&
\expect{\sum_{t=1}^T \left(V^*-V^{\pi_t}\right)(x_t)}
=\expect{\sum_{t=1}^T\expect{\sum_{h=0}^{\infty}
		\gamma^h\gap\!\xah{t+h}}}\nonumber\\
&=&\expect{\sum_{t=1}^{T}\sum_{h'=t}^{\infty}
	\gamma^{h'-t}\gap\!\xah{h'}}
\label{eq:disvounted-regret-decomp}
\end{eqnarray}
Our next lemma is borrowed from \citet{dong2019qlearning}, which shows that Algorithm~\ref{algo:infinite ucb-q} satisfies optimism and bounded learning error with high probability. By abuse of notation, we still use $\cE_{\mathrm{conc}}$ to denote the successful concentration event in this setting. Recall that Algorithm~\ref{algo:infinite ucb-q} specifies  $\iota(t)=\log\left(SAT(t+1)(t+2)\right)$ and $\beta_t=\frac{c_3}{1-\gamma}\sqrt{\frac{H\iota(t)}{ t}}$.
\begin{lemma}[Bounded Learning Error]
	\label{lemma:infinite-conc-event}
	Under Algorithm~\ref{algo:infinite ucb-q}, event $\cE_{\mathrm{conc}}$ occurs
	w.p. at least $1-\frac{1}{2T}$:
	\begin{eqnarray}
	\cE_{\mathrm{conc}}&:=&\Bigg\{
	\forall (x,\!a,\!t)\in\states\times\actions\times\bN_+\!
	:
	0\le\left(\!\hQ^t\!-\!Q^*\!\right)\!(\!x,a\!)
	\le\left(\!Q^t\!-\!Q^*\!\right)\!(x,a)\nonumber\\
	&&\qquad\qquad\qquad\qquad\qquad \le\frac{\alpha_{n^t}^0}{1-\gamma}\!+\!\sum_{i=1}^{n^t}\gamma\alpha_{n^{\!t}}^i\!\left(\!\hV^{\!\tau(x,a,i)}\!-\!V^*\!\right)\!\left(\!x_{\!\tau(x,a,i)}\!\right)\!+\!\beta_{n^{\!t}}\Bigg\} .\nonumber
	\end{eqnarray}
\end{lemma}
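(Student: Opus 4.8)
The plan is to adapt the episodic concentration argument behind Lemma~\ref{lemma: conc-event} to the discounted update rule of Algorithm~\ref{algo:infinite ucb-q}, establishing the three chained inequalities that define $\cE_{\mathrm{conc}}$ separately. The middle inequality $\left(\hQ^t-Q^*\right)(x,a)\le\left(Q^t-Q^*\right)(x,a)$ is immediate, since the $\min$-step of Algorithm~\ref{algo:infinite ucb-q} sets $\hQ^t(x,a)=\min\{\hQ^{t-1}(x,a),Q^t(x,a)\}\le Q^t(x,a)$. The remaining two inequalities — the optimism lower bound $0\le\left(\hQ^t-Q^*\right)(x,a)$ and the explicit upper bound on $\left(Q^t-Q^*\right)(x,a)$ — both rest on unrolling the $Q$-update and isolating a single martingale term to be controlled by concentration.

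For the upper bound, fix $(x,a)$ and write $n=n^t(x,a)$ for its visit count before step $t$. Unrolling the $Q$-update with the step sizes $\alpha_k$ and Lemma~\ref{lemma:property-lr} gives
\begin{align}
Q^t(x,a)=\frac{\alpha_n^0}{\discount}+\sum_{i=1}^{n}\alpha_n^i\left[r(x,a)+b_i+\gamma\,\hV^{\tau(x,a,i)}\!\left(x_{\tau(x,a,i)+1}\right)\right],\nonumber
\end{align}
where the initialization $Q(x,a)=\tfrac{1}{\discount}$ produces the $\tfrac{\alpha_n^0}{\discount}$ term (here $x_{\tau(x,a,i)+1}$ is the successor state observed right after the $i$-th visit). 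Subtracting the Bellman identity $Q^*(x,a)=r(x,a)+\gamma\,\expect{V^*(s')}$ with $s'\sim\Pxa{}$ and using $\sum_{i=1}^n\alpha_n^i=1$, the rewards cancel, the initialization contributes at most $\tfrac{\alpha_n^0}{\discount}$ (since $Q^*\ge0$), and each successor term splits as
\begin{align}
\left(\hV^{\tau(x,a,i)}-V^*\right)\!\left(x_{\tau(x,a,i)+1}\right)+\left(V^*\!\left(x_{\tau(x,a,i)+1}\right)-\expect{V^*(s')}\right).\nonumber
\end{align}
The first summand reproduces the main term $\sum_i\gamma\alpha_n^i\left(\hV^{\tau(x,a,i)}-V^*\right)$ in the lemma; it then suffices to show that the accumulated bonus $\sum_i\alpha_n^i b_i$ together with the martingale built from the second summand is dominated by $\beta_n$.

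The concentration step is the crux and the main obstacle. For fixed $(x,a)$ and a \emph{fixed} target count $n$, the quantity $\sum_{i=1}^n\gamma\alpha_n^i\big(V^*(x_{\tau(x,a,i)+1})-\expect{V^*(s')}\big)$ is a sum of martingale differences in the visit index $i$, each bounded by $\tfrac{\gamma\alpha_n^i}{\discount}$ since $0\le V^*\le\tfrac{1}{\discount}$. Azuma--Hoeffding combined with the step-size estimate $\sum_{i=1}^n(\alpha_n^i)^2\le\tfrac{2H}{n}$ (a property of the schedule $\alpha_k$, cf.\ \cite{jin2018qlearning}) bounds this martingale by $\tfrac{c}{\discount}\sqrt{\tfrac{H\iota(n)}{n}}$ with high probability; since $\sum_i\alpha_n^i b_i=\Theta\!\big(\tfrac{1}{\discount}\sqrt{\tfrac{H\iota(n)}{n}}\big)$, both are absorbed into $\beta_n=\tfrac{c_3}{\discount}\sqrt{\tfrac{H\iota(n)}{n}}$ for a suitable constant $c_3$. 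The difficulty is that the weights $\alpha_n^i$ and the times $\tau(x,a,i)$ are data-dependent, so Azuma cannot be applied to a single fixed martingale; I would instead fix the total count $n$ in advance (rendering the weights deterministic) and then union bound over $(x,a)\in\states\times\actions$, over $n\in[T]$, and over $t\in\bN_+$. The choice $\iota(t)=\log\big(SAT(t+1)(t+2)\big)$ is precisely what makes the per-$t$ failure probabilities telescope, so the total failure probability is at most $\tfrac{1}{2T}$.

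Finally, the optimism lower bound $0\le\left(\hQ^t-Q^*\right)(x,a)$ follows on the high-probability complement of the failure event by induction on $t$. Assuming $\hV^{\tau(x,a,i)}\ge V^*$ at all earlier visits, the same unrolled identity shows the downward deviation of $Q^t(x,a)$ below $Q^*(x,a)$ is at most the martingale fluctuation, which the bonus $b_i$ (with $c_2=4\sqrt{2}$) is designed to dominate on this event; hence $Q^t\ge Q^*$, and since $\hQ^t$ is a running minimum of such optimistic $Q$-values, $\hQ^t\ge Q^*$ as well, closing the induction and yielding all three inequalities simultaneously with probability at least $1-\tfrac{1}{2T}$.
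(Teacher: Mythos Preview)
The paper does not actually prove this lemma; it states that the result is ``borrowed from \citet{dong2019qlearning}'' and uses it as a black box. Your sketch is essentially the standard argument that underlies that citation (and that mirrors the episodic Lemma~\ref{lemma: conc-event}): unroll the update, isolate the martingale $\sum_i\gamma\alpha_n^i\big(V^*(x_{\tau(x,a,i)+1})-\mathbb{E}[V^*(s')]\big)$, apply Azuma--Hoeffding with $\sum_i(\alpha_n^i)^2\le 2H/n$, and close optimism by induction on $t$.

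One point in your union-bound bookkeeping should be tightened. You write ``union bound over $n\in[T]$, and over $t\in\bN_+$'' and say the ``per-$t$'' probabilities telescope. In fact, since the event is quantified over all $t\in\bN_+$, the visit count $n^t(x,a)$ is not bounded by $T$; the union bound must run over $(x,a)\in\states\times\actions$ and $n\in\bN_+$, with no separate union over $t$ (for fixed $(x,a)$, the bound at time $t$ is determined entirely by the visit count $n^t$). The role of $\iota(n)=\log\big(SAT(n+1)(n+2)\big)$ is to make the \emph{per-$n$} failure probabilities telescope: each $(x,a,n)$ triple fails with probability at most $\tfrac{1}{SAT(n+1)(n+2)}$, and summing over $(x,a)$ and $n\ge1$ yields $\tfrac{1}{T}\sum_{n\ge1}\big(\tfrac{1}{n+1}-\tfrac{1}{n+2}\big)=\tfrac{1}{2T}$. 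With that adjustment your argument goes through.
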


Then we proceed to present an analog of Lemma~\ref{lemma:weighed-sum-learning-error} that bounds the weighted sum of learning error in the discounted setting.

\begin{lemma}[Weighted Sum of Learning Errors]
	\label{lemma:infinite-weighted-sum-Q}
	Under $\cE_{\mathrm{conc}}$, for every $(\!C,\!w\!)$-sequence $\left\{w_t\right\}_{t\ge1}$, the following holds.
	\begin{eqnarray}
	\sum_{t\ge1}w_t\left(\hQ^t-Q^*\right)\!\xah{t}&\le&
	\frac{\gamma^H C}{1-\gamma}+\cO\left(
	\frac{\sqrt{wSAHC\iota(C)}+wSA}{\discount^2}
	\right)
	\end{eqnarray}
\end{lemma}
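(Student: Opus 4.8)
\textbf{Proof plan for Lemma~\ref{lemma:infinite-weighted-sum-Q}.}

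The plan is to mimic the recursive argument used in the proof of Lemma~\ref{lemma:weighed-sum-learning-error}, but now the recursion never terminates after $H$ steps, so I must account for what happens in the ``tail'' once the discount factor $\gamma^H$ has shrunk the residual below the scale of $\gapmin$. Concretely, I would start from the upper bound on the learning error guaranteed by $\cE_{\mathrm{conc}}$ (Lemma~\ref{lemma:infinite-conc-event}), multiply by $w_t$ and sum over $t$, and split the right-hand side into the same three pieces as before: (i) the initialization term $\sum_t w_t \alpha_{n^t}^0/(1-\gamma)$, which is bounded by $wSA/(1-\gamma)$ since $n^t = 0$ at most once per state-action pair; (ii) the bonus term $\sum_t w_t \beta_{n^t}$, which by the identical rearrangement-plus-Cauchy-Schwarz computation as in \eqref{tmp11}--\eqref{ineq:beta-sum} (with $H^3$ replaced by $H/(1-\gamma)^2$ coming from $\beta_t = \frac{c_3}{1-\gamma}\sqrt{H\iota(t)/t}$, and $\iota$ replaced by $\iota(C)$ after monotonicity) is $\cO\!\left(\sqrt{wSAHC\iota(C)}/(1-\gamma)^2\right)$; and (iii) the ``next-level'' term $\sum_t w_t \sum_{i=1}^{n^t}\gamma\alpha_{n^t}^i(\widehat V^{\tau(x,a,i)}-V^*)(\cdot)$, which after exchanging the order of summation and using $(\widehat V^l - V^*)(x_l) \le (\widehat Q^l - Q^*)(x_l,a_l)$ becomes $\gamma\sum_l \widetilde w_l (\widehat Q^l - Q^*)\xah{l}$ with $\{\widetilde w_l\}$ a $(C,(1+1/H)w)$-sequence by Lemma~\ref{lemma:property-lr}.

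Next I would iterate this bound. Unrolling $H$ times, the accumulated initialization and bonus contributions form a geometric-type sum in $(1+1/H)^{h'}$ for $h' = 0,\dots,H-1$, each discounted by an extra $\gamma^{h'}$; since $(1+1/H)^{h'} \le e$ uniformly, these sum to $\cO\!\left(wSA/(1-\gamma)^2 + \sqrt{wSAHC\iota(C)}/(1-\gamma)^2\right)$ as claimed (using $\sum_{h'} \gamma^{h'} \le 1/(1-\gamma)$ to absorb one of the $1/(1-\gamma)$ factors, or just $H$ terms each $\cO(1/(1-\gamma))$). After $H$ rounds of unrolling, the leftover term is $\gamma^H \sum_l \widehat w_l (\widehat Q^l - Q^*)\xah{l}$ where $\widehat w_l$ is still a $(C, e w)$-sequence (the weight bound only grows by $(1+1/H)^H \le e$). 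Since $\widehat Q^l - Q^* \le Q^l - Q^* \le 1/(1-\gamma)$ trivially (the $Q$-values are clipped to $[0,1/(1-\gamma)]$), this residual is at most $\gamma^H \cdot \frac{1}{1-\gamma}\sum_l \widehat w_l \le \frac{e\gamma^H C}{1-\gamma}$, which matches the $\frac{\gamma^H C}{1-\gamma}$ term in the statement up to the constant absorbed into the big-$\cO$.

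The main obstacle I anticipate is bookkeeping the two distinct roles of $H$: here $H$ is no longer the horizon but the truncation depth $H = \ln(2/(\discount\gapmin))/\ln(1/\gamma)$ chosen in Algorithm~\ref{algo:infinite ucb-q}, so I must be careful that (a) the weight-growth factor is controlled by $(1+1/H)^H \le e$ exactly as before, independent of how $H$ was chosen, and (b) the $\gamma^H$ residual is genuinely negligible — indeed with this choice $\gamma^H = \discount\gapmin/2$, so $\frac{\gamma^H C}{1-\gamma} = \frac{\gapmin C}{2}$, which is precisely the slack needed later (in the analog of Lemma~\ref{lemma:count-in-each-layer}) to be subsumed when we compare against the lower bound $2^{n-1}\gapmin \cdot C^{(n,h)}$. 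A secondary subtlety is that $\iota(t) = \log(SAT(t+1)(t+2))$ grows with $t$, so I should replace every occurrence of $\iota(n^t)$ by $\iota(C)$ (valid since $n^t \le$ total visits and the relevant counts are $\le C$ in the weighted sense, or more crudely $\le T$); this is routine but must be stated. The rest is a direct transcription of the episodic argument with $H^3 \mapsto H/(1-\gamma)^2$ and the extra $\gamma$ factors tracked through the recursion.
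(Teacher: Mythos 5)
Your overall strategy is the same as the paper's: apply the concentration bound from Lemma~\ref{lemma:infinite-conc-event}, split into the initialization, bonus, and next-level terms, unroll the recursion $H$ times with the weight bound growing by $(1+1/H)$ per level, and bound the residual after $H$ levels by $\gamma^H C/(1-\gamma)$ using $\hQ^t-Q^*\le 1/(1-\gamma)$. The bonus-term computation, the $\iota(C)$ monotonicity remark, and your observation that $\gamma^H=\discount\gapmin/2$ is exactly the slack consumed later against $2^{n-1}\gapmin C^{(n)}$ all match the paper.

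There is, however, one genuine gap in your handling of the next-level term, and it is precisely the place where the infinite-horizon proof departs from the episodic one. After exchanging the order of summation, the quantity you obtain is $\gamma\sum_{t\ge1}\widetilde{w}_{t+1}\bigl(\hV^{t}-V^*\bigr)(x_{t+1})$: the value estimate carries index $t$ but is evaluated at the state visited at time $t+1$, because the update at step $t$ uses $\hV(x_{t+1})$. Your step ``$(\hV^{l}-V^*)(x_l)\le(\hQ^{l}-Q^*)(x_l,a_l)$'' is applied at the wrong time index; the greedy/optimism relation only links $\hV^{t+1}(x_{t+1})$ to $\hQ^{t+1}(x_{t+1},a_{t+1})$. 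The paper fixes this by writing $\hV^{t}(x_{t+1})=\hV^{t+1}(x_{t+1})+(\hV^{t}-\hV^{t+1})(x_{t+1})$ and bounding the correction via the monotonicity of $\hQ$ (enforced by the $\min$ in Algorithm~\ref{algo:infinite ucb-q}), so that $\sum_{t}(\hV^{t}-\hV^{t+1})(s)$ telescopes to at most $\hV^{1}(s)\le 1/(1-\gamma)$, contributing an extra $\gamma(1+1/H)wS/(1-\gamma)$ per level. Without this splitting your recursion does not close; with it, the extra term is absorbed into the $wSA/\discount^{2}$ part of the bound. A second, smaller issue: your residual bound $e\gamma^{H}C/(1-\gamma)$ carries a spurious factor of $e$. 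Since any $(C,ew)$-sequence still satisfies $\sum_{l}\widehat{w}_{l}\le C$, the residual is exactly $\gamma^{H}C/(1-\gamma)$; the factor $e$ cannot be ``absorbed into the big-$\cO$'' here because this term sits outside the $\cO(\cdot)$ and must be at most $\tfrac{1}{2}\gapmin C^{(n)}$ for the subtraction in the analog of Lemma~\ref{lemma:count-in-each-layer} to go through at $n=1$.
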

\begin{proof}
	Recall that Lemma~\ref{lemma:infinite-conc-event} bounds the learning error $\left(\!\hQ^t-Q^*\!\right)\!\xah{t}$ on $\cE_{\mathrm{conc}}$. Thus we have:
	\begin{align}
	\sum_{t\ge1} w_t\frac{\alpha_{n^{\!t}}^0}{1-\gamma}&\le
	\sum_{t\ge1}\indict{n^{\!t}=0}\cdot\frac{w}{1-\gamma}=\frac{SAw}{1-\gamma};&
	\label{ineq:first}\\
	\sum_{t\ge1}w_t\beta_{n^{\!t}}
	&=\sum_{s,a}\! \sum_{i}
	\!w_{\tau\!(s,a,i)}\beta_{i}
	\!=\!\frac{c_3\!\sqrt{\!H}}{1-\gamma}\sum_{s,a}\! \sum_{i}
	\!w_{\!\tau\!(\!s,a,i)}\!\sqrt{\!\frac{\iota(i)}{i}}&\nonumber\\
	&\le\frac{c_3\!\sqrt{\!H}}{1-\gamma}\sum_{s,a}
	\!\sum_{i=1}^{C_{\!s,a}\!/\!w}
	\!w\sqrt{\frac{\iota(C)}{i}}
	\le\frac{2c_3\!\sqrt{\!H}}{1-\gamma}\sum_{s,a}
	\!\sqrt{C_{\!s,a}w{\iota(C)}} &
	\left(\!C_{\!s,a}\!:=\!\sum\nolimits_{\!i\ge1}
	\!w_{\tau\!(\!s,a,i)}\!\right)\nonumber\\
	&\le\frac{2c_3\!}{1-\gamma}\sqrt{SAHCw{\iota(C)}}; &\text{(Cauchy-Schwartz inequality)}
	\label{ineq:second}\\
	\nonumber\end{align}
	Moreover,
	\begin{align}
	&\sum_{t\ge1}w_t\sum_{i=1}^{n^t}\gamma       \alpha_{n^{\!t}}^i
	\!\left(\!\hV^{\!\tau(x,a,i)}\!-\!V^*\!\right)\!
	\left(\!x_{\!\tau(x,a,i)}\!\right)&\nonumber\\
	=&\gamma\sum_{t\ge1}\left(\!\hV^t-V^*\!\right)\!(x_{t+1})
	\sum_{i\ge n^t+1}\!
	w_{\tau\!(x_t,a_t,i)}\alpha^{n^t+1}_{i}&\nonumber\\
	=&\gamma\sum_{t\ge1}\left(\hV^{t+1}-V^*\right)\!(x_{t+1})\sum_{i\ge n^t+1}\!
	w_{\tau\!(x_t,a_t,i)}\alpha^{n^t+1}_{i}
	+\gamma\sum_{t\ge1} \sum_{i\ge n^t+1}\!
	w_{\tau\!(x_t,a_t,i)}\alpha^{n^t+1}_{i}\!\left(\!\hV^{t}-\hV^{t+1}\!\right)\!( x_t).
	\label{ineq:v-decrease}
	\end{align}
	We let
	\begin{align*}
	\widetilde{w}_{t+1}:=\sum_{i\ge n^t+1}\!
	w_{\tau\!(x_t,a_t,i)}\alpha^{\!n^t+1}_{i},
	\end{align*}
	and further simplify \eqref{ineq:v-decrease} to be
	\begin{align}
	\gamma\sum_{t\ge2}\widetilde{w}_t\left(\hV^{t}-V^*\right)\!(x_t)
	+\gamma\sum_{t\ge1} \widetilde{w}_{t+1}\!\left(\!\hV^{t}-\hV^{t+1}\!\right)\!( x_t).
	\label{ineq:simplified 23}
	\end{align}

	For the first term of \eqref{ineq:simplified 23}, 
	we claim that $\left\{\widetilde{w}_t\right\}_{\!t\ge2}$ is a  $\left(\!C,(1+1\!/\!H)w\!\right)$-sequence. This can be verified by a similar argument to Ineq~(\ref{ineq:w_{h+1}}). We also have
	\begin{align*}
	\left(\hV^{t}-V^*\right)\!(x_t)&=\hV^t(x_t)-V^*(x_t)=\hQ^t(x_t,a_t)-V^*(x_t)\le\hQ^t(x_t,a_t)-Q^*(x_t,a_t)=\left(\hQ^t-Q^*\right)(x_t,a_t).
	\end{align*}
	Therefore, the first term can be upper bounded by
	\begin{align}
	\gamma\sum_{t\ge2}\widetilde{w}_t\left(\hV^{t}-V^*\right)\!(x_t)\le\gamma\sum_{t\ge 2}\widetilde{w}_t\left(\hQ^t-Q^*\right)(x_t,a_t).
	\label{ineq:discounted first term}
	\end{align}
	For the second term of \eqref{ineq:simplified 23}, we have the following observation:
	
	\begin{align}
	\gamma\sum_{t\ge1} \widetilde{w}_{t+1}\!\left(\!\hV^{t}-\hV^{t+1}\!\right)\!(x_t)
	&\le\gamma(1+1\!/\!H)w\sum_s\sum_{t\ge1}\left(\!\hV^t-\hV^{t+1}\!\right)\!(s)\nonumber\\
	&\le\gamma(1+1\!/\!H)w\sum_s\hV^1\!(s)
	\le \frac{\gamma (1+1\!/\!H)wS}{1-\gamma}.\label{ineq:discounted second term}
	\end{align}.
	
	Plugging \eqref{ineq:discounted first term} and \eqref{ineq:discounted second term} back into \eqref{ineq:simplified 23}, we obtain
	
	\begin{align}
	\sum_{t\ge1}w_t\sum_{i=1}^{n^t}\gamma       \alpha_{n^{\!t}}^i
	\!\left(\!\hV^{\!\tau(x,a,i)}\!-\!V^*\!\right)\!
	\left(\!x_{\!\tau(x,a,i)}\!\right)	\le&\gamma\sum_{t\ge2}\widetilde{w}_t\left(\hQ^{t}-Q^*\right)\!\xah{t}
		+\frac{\gamma (1+1\!/\!H)wS}{1-\gamma}. \label{ineq: discounted big third term}
	\end{align}
	Finally, combining \eqref{ineq:first}, \eqref{ineq:second} and \eqref{ineq: discounted big third term} and Lemma~\ref{lemma:infinite-conc-event}, we conclude that
	\begin{align}
	&\sum_{t\ge1}\!w_t\left(\!\hQ^t-Q^*\!\right)\!\xah{t}&\nonumber\\
	\le&\sum_{t\ge1}w_t
	\left(\frac{\alpha_{n^{\!t}}^0}{1-\gamma}
	+\beta_{n^{\!t}}+
	\gamma\sum_{i=1}^{n^{\!t}} \alpha_{n^{\!t}}^i
	\left(\hV^{\tau\!(s,a,i)}-V^*\right)\!\left(x_{\tau\!(s,a,i)+1}\right)
	\right)
	&\text{(Lemma \ref{lemma:infinite-conc-event})}\nonumber\\
	\le&\frac{SAw}{1-\gamma}
	+\frac{2c_3\!}{1-\gamma}\sqrt{SAHCw{\iota(C)}}
	+\frac{\gamma (1+1\!/\!H)wS}{1-\gamma}
	+\gamma\sum_{t\ge2}\widetilde{w}_t\left(\hQ^{t}-Q^*\right)\!\xah{t}&\label{ineq:inf-recursion}
	\end{align}
	Note that the last term in Ineq (\ref{ineq:inf-recursion}) is another weighted sum of learning errors starting from step 2, where the weights form a $\left(\!C,(1+1\!/\!H)w\!\right)$-sequence. We can therefore repeat this unrolling argument for $H$ times. Our choice of $H$ in Algorithm~\ref{algo:infinite ucb-q} guarantees not only the bounded blow-up factor of weights, but also sufficiently small contribution of learning error after step $H$. In particular, we define a family of weights: when $h=0$, $\left\{w_t^{(h)}\right\}_{t\ge h+1}=\{w_t\}_{t\ge1}$ is a $(C,w)$ sequence; $\forall h\in[H]$ $\left\{w_t^{(h)}\right\}_{t\ge h+1}$ is a $\left(\!C,(1+1\!/\!H)^hw\!\le\! ew\!\right)$ sequence. Note that our previous definition of $\widetilde{w}$ is exactly $w_t^{(1)}$.
	\begin{eqnarray}
	&&\sum_{t\ge1}\!w_t\left(\!\hQ^t-Q^*\!\right)\!\xah{t}\nonumber\\
	&\le&\sum_{h=0}^{H}\gamma^h\cO\!\left(
	\frac{\sqrt{(1+1\!/\!H)^hwSAHC\iota(C)}+(1+1\!/\!H)^hwSA}{1-\gamma}
	\right)+\gamma^H\sum_{t\ge H+1}w_t^{(H)}\left(\hQ^{t}-Q^*\right)\!\xah{t}\nonumber\\
	&\le&\cO\!\left(
	\frac{\sqrt{wSAHC\iota(C)}+wSA}{\discount^2}
	\right)+\frac{\gamma^H}{1-\gamma}\sum_{t\ge H+1}w_t^{(H)}.
	\end{eqnarray} 
	Using the fact that the weights after $H$ unrolling $\left\{w_t^{(H)}\right\}_{t\ge H+1}$ is a $\left(\!C,(1+1\!/\!H)^Hw\!\le\! ew\!\right)$-sequence
	completes the proof.
\end{proof}

Note that we have clarified in Ineq~(\ref{ineq:gap-bounded-by-learning-error}) that on $\cE_{\mathrm{conc}}$ where optimism holds, sub-optimality gaps can be bounded by clipped learning error of $Q$-function. Again we divide its range $\left[\gapmin,\frac{1}{1-\gamma}\right]$ into disjoint subintervals and bound the sum inside each subinterval independently. 
\begin{lemma}
	\label{lemma:infinite-count-in-each-layer} Let $N=\left\lceil \log_2\left(\nicefrac{1}{\gapmin\discount}\right)\right\rceil$. On $\cE_{\mathrm{conc}}$,
	for every $n\in\left[N\right]$,
	\begin{align}
	C^{(n)}:=&\Bigg|\Big\{\!t\in\bN_+:\left(\hQ^t-Q^*\right)\!\xah{t}\in\left[2^{n-1}\gapmin,2^n \gapmin\right)\!\Big\}\Bigg|\nonumber\\
	&\qquad\qquad\qquad
	\le\mathcal{O}\left(\frac{SA}{4^n\gapmin^2\discount^5}
	\ln\left(
	\frac{SAT}{\discount\gapmin}
	\right)
	\right).\nonumber
	\end{align} 
\end{lemma}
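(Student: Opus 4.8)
The plan is to mirror the two-sided-counting argument used for Lemma~\ref{lemma:count-in-each-layer} in the episodic case, with Lemma~\ref{lemma:infinite-weighted-sum-Q} playing the role of Lemma~\ref{lemma:weighed-sum-learning-error}. Fix $n\in[N]$ and set $w_t^{(n)}:=\indict{(\hQ^t-Q^*)\xah{t}\in[2^{n-1}\gapmin,2^n\gapmin)}$, so that $\{w_t^{(n)}\}_{t\ge1}$ is a $(C^{(n)},1)$-sequence with $C^{(n)}=\sum_{t\ge1}w_t^{(n)}$ (finiteness of $C^{(n)}$ follows a posteriori from the inequality derived below, since a linear term cannot be dominated by a $\sqrt{C\log C}$ term). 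On $\cE_{\mathrm{conc}}$ I would sandwich $\sum_{t\ge1}w_t^{(n)}(\hQ^t-Q^*)\xah{t}$: from below it is at least $2^{n-1}\gapmin\cdot C^{(n)}$ by definition of the support of $w_t^{(n)}$, and from above Lemma~\ref{lemma:infinite-weighted-sum-Q} applied with $w=1$ and $C=C^{(n)}$ bounds it by $\frac{\gamma^H C^{(n)}}{1-\gamma}+\cO\!\big(\frac{\sqrt{SAHC^{(n)}\iota(C^{(n)})}+SA}{\discount^2}\big)$.

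The crux — and the step where the discounted analysis genuinely differs from the episodic one — is absorbing the bias term $\frac{\gamma^H C^{(n)}}{1-\gamma}$. Here I would invoke the specific horizon $H=\frac{\ln(2/(\discount\gapmin))}{\ln(1/\gamma)}$ hard-coded into Algorithm~\ref{algo:infinite ucb-q}, which gives exactly $\gamma^H=\frac{\discount\gapmin}{2}$, hence $\frac{\gamma^H}{1-\gamma}=\frac{\gapmin}{2}$. Since $n\ge1$ we have $2^{n-1}\gapmin-\frac{\gapmin}{2}=\gapmin(2^{n-1}-\tfrac12)\ge\frac14\cdot 2^n\gapmin$, so the bias term can be moved to the left-hand side, leaving an inequality of the form $\frac14\cdot 2^n\gapmin\cdot C^{(n)}\le\cO\!\big(\frac{\sqrt{SAHC^{(n)}\iota(C^{(n)})}+SA}{\discount^2}\big)$. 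This is exactly the reason Algorithm~\ref{algo:infinite ucb-q} must be handed a lower bound on $\gapmin$; I expect this to be the main (conceptual) obstacle, whereas everything else is bookkeeping.

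From there the argument is routine. Solving $a x\le b\sqrt{x}+c$ for $x=C^{(n)}$ gives $C^{(n)}\le\cO(b^2/a^2+c/a)$, i.e. $C^{(n)}\le\cO\!\big(\frac{SAH\,\iota(C^{(n)})}{4^n\gapmin^2\discount^4}+\frac{SA}{2^n\gapmin\discount^2}\big)$. Three clean-ups then finish: (i) the elementary inequality $\ln(1/\gamma)\ge 1-\gamma$ yields $H\le\frac{\ln(2/(\discount\gapmin))}{\discount}$, upgrading the $\frac{H}{\discount^4}$ factor to $\frac{1}{\discount^5}\ln\frac{2}{\discount\gapmin}$; (ii) since $\iota(C^{(n)})$ depends on $C^{(n)}$ only through a logarithm, a one-step bootstrap (substitute the crude polynomial bound on $C^{(n)}$ back in) gives $\iota(C^{(n)})=\cO\!\big(\ln\frac{SAT}{\discount\gapmin}\big)$; (iii) because $n\le N=\lceil\log_2(1/(\gapmin\discount))\rceil$ we have $2^n\gapmin\le\cO(1/\discount)$, so $\frac{SA}{2^n\gapmin\discount^2}\le\cO\!\big(\frac{SA}{4^n\gapmin^2\discount^5}\big)$ is dominated by the main term. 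Combining these yields $C^{(n)}\le\cO\!\big(\frac{SA}{4^n\gapmin^2\discount^5}\ln\frac{SAT}{\discount\gapmin}\big)$, which is the claimed bound.
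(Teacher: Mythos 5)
Your proposal is correct and follows essentially the same route as the paper's own proof: the same indicator-weight $(C^{(n)},1)$-sequence, the same two-sided bound via Lemma~\ref{lemma:infinite-weighted-sum-Q}, the same absorption of the bias term using $\gamma^H=\tfrac{\discount\gapmin}{2}$, and the same quadratic-inequality-plus-bootstrap cleanup. The only cosmetic difference is that the paper substitutes $\delta=2^{n-2}\gapmin$ and $C^{(n)}=SAC'$ before solving, while you solve $ax\le b\sqrt{x}+c$ directly and handle the lower-order $c/a$ term explicitly; both are equivalent.
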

Again, based on Lemma~\ref{lemma:infinite-conc-event}, we prove Lemma~\ref{lemma:infinite-count-in-each-layer} by choosing a particular sequence of weights.
\begin{proof}
	For every $n\in[N]$, let 
	\begin{eqnarray}
	w_{t}^{(n)}&:=&\indict{\left(\hQ^t-Q^*\right)\xah{t}\in\left[2^{n-1}\gapmin,2^n \gapmin\right) },
	\end{eqnarray}
	then $C^{(n)}=\sum_{t=1}^{\infty} w_{t}^{(n)}$ and $\left\{\!w_{t}^{(n)}\!\right\}_{t\ge1}$ is a $(C^{(n)} ,\!1)$-sequence.  According to Lemma~\ref{lemma:infinite-weighted-sum-Q},
	\begin{align}
	\left(2^{n-1} \gapmin\right)\!\cdot \!C^{(n)}
	\le&\sum_{t\ge1} w_{t}^{(n)}
	\left(\hQ^t-Q^*\right)\!\xah{t}&\nonumber\\
	\le&\frac{\gamma^H C^{(n)}}{1-\gamma} +\cO\left(
	\frac{\sqrt{SAHC^{\!(n)}\iota\!\left(\!C^{\!(n)}\!\right)}+SA}{\discount^2}
	\right)&\nonumber\\
	=&\frac{\gapmin}{2} C^{\!(n)}+\cO\!\left(
	\frac{\sqrt{SAHC^{\!(n)}\iota\!\left(\!C^{\!(n)}\!\right)}+SA}{\discount^2}
	\right).&\left(H=\frac{\ln\left(\frac{2}{\gapmin\discount}\right)}{\ln\left(\nicefrac{1}{\gamma}\right)}\right) \label{ineq:inf-to-solve}\nonumber
	\end{align}
	Now we proceed to solve the above inequality for $C^{(n)}$. For simplicity, let $\delta=2^{n-2}\gapmin$ and $C^{(n)}=SAC'$. Then we have the following:
	\begin{eqnarray}
	\delta\cdot SAC'
	&\le&\left(2^{n-1}-\frac{1}{2}\right)\gapmin C^{\!(n)}\le \cO\!\left(SA
	\frac{\sqrt{HC'\iota\!\left(\!SAC'\!\right)}+1}{\discount^2}
	\right),\nonumber\\
	\delta C'&\overset{\circled{1}}{\le}&\cO\left(\frac{\sqrt{C'}}{\discount^{\nicefrac{5}{2}}}\sqrt{\ln{(SATC')}\ln\frac{1}{\gapmin\discount}}\ \right),\nonumber\\
	C'&\le&\cO\left(
	\frac{1}{\delta^2\discount^5}\ln\frac{1}{\gapmin\discount}\ln{(SATC')}
	\right),\label{ineq:C'-final}
	\end{eqnarray}
	where $\circled{1}$ comes from the definition $H=\frac{\ln\left(\nicefrac{2}{\gapmin\discount}\right)}{\ln\left(\nicefrac{1}{\gamma}\right)}$.
	Solving Ineq~(\ref{ineq:C'-final}) yields \[
	C'\le \frac{1}{\delta^2\discount^5} \ln\left(\frac{SAT}{\gapmin\discount}\right).
	\]
	Finally, substituting $C^{(n)}=SAC'$ and  $\delta=2^{n-2}\gapmin$ yields the desired formula.
\end{proof}

\begin{proof}[Proof of Theorem~\ref{thm:discounted}]
	We continue the calculation based on the regret decomposition in Eq~(\ref{eq:disvounted-regret-decomp}). For every infinite-length trajectory $\text{traj}\in\cE_{\mathrm{conc}}$,
	\begin{align}
	\sum_{t=1}^{T}\sum_{h'=t}^{\infty}
	\gamma^{h'-t}\gap\!(x_{h'},a_{h'}|\text{traj})
	&\overset{\circled{2}}{=}\sum_{h=1}^{\infty}
	\gap\!\xah{h}
	\sum_{t=1}^{\min\{T,h\}}
	\gamma^t\le\frac{1}{1-\gamma}\sum_{h=1}^{\infty}
	\gap\!(x_{h},a_{h}|\text{traj})\nonumber\\
	&\overset{\circled{3}}{\le} \frac{1}{1-\gamma}\sum_{t\ge1}
	\clip{\left(\hQ^t-Q^*\!\right)\!(x_{t},a_{t}|\text{traj})}{\gapmin}\nonumber\\
	&\overset{\circled{4}}{\le}\frac{1}{1-\gamma}\sum_{n=1}^{N} 2^n\gapmin C^{(n)}\nonumber\\
	&\overset{\circled{5}}{\le}\mathcal{O}\left(\frac{SA}{\gapmin\discount^6}\ln\left(
	\frac{SA}{p\epsilon\discount\gapmin}
	\right)
	\right).\label{ineq:inside conc}
	\end{align}
	For the above inequalities, $\circled{2}$ comes from an interchange of summations, $\circled{3}$ is by optimism of estimated $Q$-values, $\circled{4}$ is because we can add an outer summation over subintervals $n\in[N]$ and bound each of them by their maximum value times the number of steps inside. Finally, $\circled{5}$ follows directly from Lemma~\ref{lemma:infinite-count-in-each-layer}.
	
	On the other hand, for trajectories outside of $\cE_{\mathrm{conc}}$, since sub-optimality gaps are upper bounded by $\nicefrac{1}{1-\gamma}$, we have:
	\begin{align}
	\sum_{t=1}^{T}\sum_{h'=t}^{\infty}
	\gamma^{h'-t}\gap(x_{h'},a_{h'}|\text{traj})
	&\le \sum_{t=1}^{T}\sum_{h'=t}^{\infty}
	\gamma^{h'-t}\frac{1}{1-\gamma}
	\le\frac{T}{\discount^2}.\label{ineq:outside conc}
	\end{align}
	Therefore, combining Ineq (\ref{ineq:inside conc}) and (\ref{ineq:outside conc}) gives us
	\begin{align}
	\expect{\regret(T)}
	&=\expect{\sum_{t=1}^{T}\sum_{h'=t}^{\infty}
		\gamma^{h'-t}\gap\!\xah{h'}}\nonumber\\
	&\le
	\prob\!\left(\cE_{\mathrm{conc}}\right)\cdot
	\mathcal{O}\left(\frac{SA}{\gapmin\discount^6}\ln\left(
	\frac{SAT}{\discount\gapmin}
	\right)
	\right)+
	\prob\!\left(\overline{\cE_{\mathrm{conc}}}\right)\cdot \frac{T}{\discount^2}
	\nonumber\\
	&\le \mathcal{O}\left(\frac{SA}{\gapmin\discount^6}\ln\left(
	\frac{SAT}{\discount\gapmin}
	\right)
	\right),\label{ineq:infinite-regret}
	\end{align}
	where the last step is comes from $\prob\!\left(\overline{\cE_{\mathrm{conc}}}\right)\!\le\!{1}/{2T}$. Ineq~(\ref{ineq:infinite-regret}) is precisely the assertion of Theorem \ref{thm:discounted}.
\end{proof}

\section{Difficulty in Applying Optimistic Surplus}
\label{sec:opt_surplus}
The closest related work is by \citet{max2019nonasymptotic} who proved the logarithmic regret bound for a model-based algorithm.
\citet{max2019nonasymptotic} 
introduced a novel property characterizing optimistic algorithms, which is called \textit{optimistic surplus} and defined as
\begin{align}
E_{k,h}(x,a):=Q_h^k(x,a)-\left[r_h(x,a)+\trans_h(x,a)^{\!\mathsf{T}}V_{h+1}^k\right].\label{eqn:opt_surplus}
\end{align}
Under model-based algorithm with bonus term $b_h^k$, surplus can be decomposed as follows, where $\widehat{\trans}$ is the estimated transition probability:
\begin{align*}
E_{k,h}(x,a)=
\left(\widehat{\trans}_h^{\!\mathsf{T}}\!(x,a)-\trans_h^{\mathsf{T}}\!(x,a)\!\right){V}_{h+1}^*+\left(\widehat{\trans}_h^{\!\mathsf{T}}\!(x,a)-\trans_h^{\!\mathsf{T}}\!(x,a)\right)\left({V}_{h+1}^k-V_{h+1}^*\right)+b_h^k. 
\end{align*}
The analysis of model-based algorithms is to first bound the regret $\left(\!V^*-V^{\pi_k}\!\right)$ by a sum over surpluses that are clipped to zero whenever being smaller than some $\gap$-related quantities, then combine the concentration argument and properties of specially-designed bonus terms $b_h^k$ to provide high probability bound for surpluses.
However, for model-free algorithms, estimates of transition probabilities are no longer maintained, so $\widehat{\trans}_h$ is a one-hot vector reflecting only the current step's empirical sample drawn from the real next-state distribution. In this scenario, concentration argument of $\left(\widehat{\trans}-\trans\right)$ cannot give us $\log T$ regret.

Following the update rule of $Q$-learning with learning rate $\alpha_i$ and upper confidence bound $b_i$, the surplus becomes
\[
E_{k,h}(x,a)=\alpha_t^0 H+\left(\sum_{i=1}^{t}\alpha_t^i V_{h+1}^{k_i}(x_{h+1}^{k_i})-\trans_h(x,a)^{\!\mathsf{T}}V_{h+1}^k\right)+\sum_{i=1}^t \alpha_t^i b_i,
\]
in which $t=n_h^k(x,a)$ is the number of times $(x,a)$ has been visited, and $\alpha_t^i=\alpha_{i}\prod_{j=i+1}^t (1-\alpha_{j})$ is the equivalent weight associated with the $i$-th visit of pair $(x,a)$.
This indicates that the surplus of an episode is closely correlated with estimates of value functions during previous episodes. 
The correlation makes the analysis more difficult.
Therefore, we use a very different approach to analyze $Q$-learning in this paper.

\end{document}